\newcommand{\fillerConditional}[1]{
\ifthenelse{\boolean{showFiller}}{#1}{}}
\newcommand{\proofConditional}[1]{
\ifthenelse{\boolean{showProofs}}{#1}{}}
\newcommand{\N}{\mathbb{N}}
\newcommand{\Z}{\mathbb{Z}}
\newcommand{\R}{\mathbb{R} }
\newcommand{\reals}{\mathbb{R} }
\newcommand{\del}{\delta}
\newcommand{\ind}{\mathbbm{1}}
\newcommand{\hist}{\mathcal{H} }
\newcommand{\X}{\mathcal{X} }
\newcommand{\F}{\mathcal{F} }
\newcommand{\T}{\mathcal{T} }
\newcommand{\E}{\displaystyle\mathop{{}\mathbb{E}}}
\newcommand{\cupop}{\displaystyle\mathop{{}\bigcup}}
\newcommand{\sumX}{\sum_{\X}}
\newcommand{\intR}{\intop_{\R}}
\newcommand{\lbar}{\Bar{L}}
\newcommand{\var}{\text{Var}}
\newcommand{\phat}{\hat{p}}
\newcommand{\qhat}{\hat{q}}
\newcommand{\hp}{h^{p}}
\newcommand{\hq}{h^{q}}
\newcommand{\Hp}{H^{p}}
\newcommand{\Hq}{H^{q}}
\newcommand{\Elong}{\displaystyle\mathop{{}\mathbb{E}}_{H \sim p^n} }
\newcommand{\Elonglong}{\displaystyle\mathop{{}\mathbb{E}}_{\substack{\Hp \sim p^n \\ \Hq \sim q^m}} }
\newcommand{\intvec}{\intop_{\substack{v \in \R^j\\ ||v||=1}}}
\newcommand{\jk}{\mathbf{j}_k}
\newcommand{\ik}{\mathbf{i}_k}
\newcommand{\distas}[1]{\mathbin{\overset{#1}{\kern\z@\sim}}}
\theoremstyle{definition}
\newtheorem{defn}{Definition}[section]
\newtheorem{claim}{Claim}
\newtheorem{theorem}{Theorem}
\newtheorem{corollary}[]{Corollary}
\newtheorem{lemma}[]{Lemma}
\newtheorem{example}[]{Example}
\newtheorem{fact}{Fact}
\title{Proper Losses for Discrete Generative Models}
\author{
  Rafael Frongillo \\
  University of Colorado Boulder\\
  \texttt{raf@colorado.edu} \\
\And
  Dhamma Kimpara \\
  University of Colorado Boulder\\
  \texttt{dhamma.kimpara@colorado.edu} \\
\And
  Bo Waggoner \\
  University of Colorado Boulder\\
  \texttt{bwag@colorado.edu} \\
}
\begin{document}

\maketitle

\begin{abstract}
We initiate the study of proper losses for evaluating generative models in the discrete setting.
Unlike traditional proper losses, we treat both the generative model and the target distribution as black-boxes, only assuming ability to draw i.i.d. samples.
We define a loss to be black-box proper if the generative distribution that minimizes expected loss is equal to the target distribution.
Using techniques from statistical estimation theory, we give a general construction and characterization of black-box proper losses: they must take a polynomial form, and the number of draws from the model and target distribution must exceed the degree of the polynomial.
The characterization rules out a loss whose expectation is the cross-entropy between the target distribution and the model.
By extending the construction to arbitrary sampling schemes such as Poisson sampling, however, we show that one can construct such a loss.
 \end{abstract}

\section{Introduction} 
Generative models are widely used tools in machine learning and statistics.
For example, Generative Adversarial Networks (GANs) have recently been successful particularly in natural language and image generation.
However, the \emph{evaluation} of generative models is still an open area of research, with many evaluation methods proposed~\citep{borji2019pros, theis2015note}.
This paper investigates theoretical foundations for evaluating generative models using a \emph{proper losses} approach.

Specifically, we consider evaluating generative models that aim to match some underlying ``target'' distribution.
For example, a GAN's goal may be to produce sentences from the same distribution as a random sentence drawn from Wikipedia; or to produce an image of a human face drawn from the same distribution as all U.S. passport photos.
In areas such as climate modeling or weather forecasting, the goal may be to produce possible future trajectories from the same distribution as the actual climate.
We abstract away from how the model is trained and learned; our focus is only on methods of evaluating the model.

We take the approach of the \emph{proper losses} and \emph{proper scoring rule} literature \cite{mccarthy1956measures,savage1971elicitation,gneiting2007strictly}, using one or more observations drawn from the target distribution to evaluate the model.
However, many generative models are essentially ``black boxes''. 
One typically cannot obtain a closed form expression for the probabilities a model assigns to different outputs.
This rules out using traditional proper losses for evaluating distributions, such as $\ell_2$ loss or log loss.
As a theoretical foundation, we instead assume only that we can draw independent and identically-distributed (i.i.d.) observations from the model $p$ and compare these to observations from the target distribution $q$.
The question is whether, and/or how, one can design losses under these restrictions that are \emph{proper}: the expected loss is minimized by setting the model's distribution equal to the target, i.e. setting $p=q$.

\paragraph{Our results.}
As the initial work taking this approach, we focus on distributions over discrete, usually finite, sample spaces.
We discuss extensions to the continuous setting in Section \ref{sec:contDiscussion}.
First, we consider an easier problem: If we had full access to the target distribution $q$, i.e. in closed form or as an oracle, can we design proper losses for evaluating the model $p$ from samples?
We call this the report-black-box (RBB) setting.
We show that the naive approach of plugging the empirical distributions directly into a distance function such as $\ell_2$ does not yield a proper loss. 
However, by using the samples to construct unbiased estimators of the error introduced, we can correct for them and produce losses that are in fact proper. 

Extending the unbiased-estimator approach, we characterize RBB-proper losses as those whose expectation is a polynomial in the model distribution, e.g. expected loss $\|p-q\|_k^k$ for even integers $k$.
For such polynomials, we explicitly construct RBB-proper losses using the classical theory of unbiased estimators.
Furthermore, the minimum number of observations that must be drawn from the model is exactly the degree of the polynomial.
On the other hand, the characterization implies impossibility results for many popular forms of distances, including the cross entropy (expected log loss).

Second, we consider the full problem: what if we only have sample access to the target distribution $q$ as well as the model $p$?
Leveraging the above results, we give a similar characterization and construction for black-box (BB) proper losses.
Again, the degree of the polynomial in $p$ (respectively, $q$) governs the the size of the necessary and sufficient sample that must be drawn.

Finally, we consider more general sampling schemes that do not draw a predetermined number of observations.
In particular, using Poisson sampling, we are able to overcome the above impossibility result and construct the first black-box proper loss that in expectation equals the cross entropy between $p$ and $q$.

\subsection{Related Work} \label{subsec:relatedWork}
Our approach is based on the axiomatic approach in the proper scoring rule and proper losses literature, e.g. \citep{gneiting2007strictly}.
Most similar to our work in this tradition is \citet{haghtalab2019toward}, which examined convergence rates of the log loss for distribution learning in a setting similar to our simplified setting.
Our characterizations will cover these proper scores as a special case, along with multi-observation losses that elicit a distribution \citep{casalaina2017multi}.

There are many losses used in evaluation and training of GANs and other Neural Network (NN) based generative models \cite{borji2019pros, theis2015note}.
In adversarial training, much attention is given to obtaining unbiased gradients.
These training losses cannot be translated into a proper loss because the loss is in a variational form that is inherent to the adversarial training method \citep{goodfellow2014generative, binkowski2018demystifying}.
However, the energy distance, a special case of the Maximum Mean Discrepancy (MMD) metric, has been used in its closed form to directly train NN based generative models \citep{dziugaite2015training, li2015generative, szekely2005new, binkowski2018demystifying}.
The MMD in general is typically only available in a variational form and thus is not proper in practice.
However, the energy distance actually can be used to construct a loss satisfying our definition of black-box proper.
So it can be viewed as a pre-existing proof of concept for the ideas formalized and generalized in this paper.
See Appendix \ref{app:contLoss} for further discussion.

In distribution learning \citep{han2020minimax} and classical machine  learning \citep{nguyen2010estimating, gyorfi1987density, hall1993estimation, joe1989estimation}, there is a line of work devoted to estimating divergences between pairs of distributions.
While these literatures provide convergence and consistency results, the estimators and distances generally do not result in proper losses.
 
\section{Background}
\label{sec:bg}
For this work $\N = \{0,1,2,3, \dots\}$.
We primarily work with distributions over a finite domain $\X$.
The set of all probability distributions over $\X$ is denoted by $\Delta_{\X}$.
We denote a distribution by a vector of probabilities $p \in \Delta_{\X} \subset \reals^\X$, where $p_x$ is the probability $p$ places on $x \in \X$.
We use $\delta_x \in \Delta_\X$ to denote an indicator vector, i.e., the distribution placing probability one on $x$. Norms without a subscript  are 2-norms: $\|\cdot\| = \|\cdot\|_2$.

In our setting, there is \emph{target} distribution $q \in \Delta_{\X}$.
We will generally use $Y$ to denote observations drawn from $q$.
We aim to evaluate a model that we will represent as $p \in \Delta_{\X}$, also a distribution.
We will generally use $X$ to denote observations drawn from $p$.
Uppercase letters generally refer to random variables while lowercase letters are realizations, e.g. $X=x$.

We will also use various unbiased estimators from classical statistical estimation theory (see Appendix \ref{app:esttheory}).
A function $f$ is an unbiased estimator for a parameter $\theta$ of a family of distributions $\{F_{\theta}\}$ if, for any $\theta$ and any random variable $Z \sim F_{\theta}$, we have $\E f(Z) = \theta$.
Unless otherwise specified, we will always use the minimum variance unbiased estimator (MVUE, see Appendix \ref{app:esttheory}).

We next recall the classical approach to evaluating $p$, which assumes full access to $p$ in closed form.
Then we introduce our setting, where we cannot access $p$ except by drawing samples.

\subsection{The classical approach: proper losses}
We proceed with our theory via the perspective of proper losses.
This literature was developed to elicit and evaluate general statistical reports or predictions from an agent.
Introduced in \cite{brier1950verification}, a proper loss (also historically termed a \emph{proper scoring rule}) is a function $r(p,y)$ that assigns a loss to a model or forecast $p$ on an observation $y$, where $y$ is drawn from the target $q$.
As we will see, proper losses do not apply in our setting because they assume ability to query the value of $p_x$ on any $x$.
Nevertheless, they are a useful starting point.

\begin{defn}
A loss function $r: \Delta_{\X} \times \X \rightarrow \R$ is \emph{proper} if for all $p,q \in \Delta_{\X}$, $\E_{y \sim q} r(q,y) \leq \E_{y \sim q} r(p,y)$.
A loss is \emph{strictly proper} if the above inequality is strict for all $p \neq q$.
\end{defn}

In other words, for any fixed target distribution $q$, the optimal model $p$ (i.e. the one that minimizes expected loss) is $p=q$.
A classic result fully characterizes all proper losses via \emph{Bregman divergences}, which can be used as measures of ``distance'' between two distributions.
For reference, we define Bregman divergences and recall the scoring rule characterization in Appendix \ref{app:jensenGap}.

The two most common proper losses are the \emph{squared loss} $r(p,y) = \|p - \delta_y\|_2^2$, where $\delta_y$ is the indicator vector on $y$; and the \emph{log loss} $r(p,y) = \log p_y$, whose expectation is the cross-entropy $\ell(p,q) = - \sumX q_x \ln p_x$.

\section{Report Black Box Proper}
To develop our results, in this section we consider a simplified setting where we have full access to the target distribution $q$.
We aim to evaluate the model $p$ based only on i.i.d. observations drawn from it.
In later sections, we will assume only sample access to $q$ as well.

\subsection{Basic definitions}
To evaluate $p$, we will draw i.i.d.\ observations from $p$.
Formally, we draw a sample $(X_1, \dots, X_n)$ where the $X_i$ are independent random variables taking values in $\X$, each distributed according to $p$.
It will be convenient to represent the sample as a \emph{histogram} $H \in \N^{\X}$, where $H_x = |\{i ~:~ X_i = x\}|$.
It is without loss of generality to consider loss functions that take $H$ as input rather than the individual samples.\footnote{By exchangeability of i.i.d. samples, any function $f(S)$ of the sample $S = (X_1,\dots,X_n)$ can be simulated by a function $g(H)$ of the histogram, where $g$ simply arranges the samples that make up $H$ in a uniformly random order to obtain $S'$ and applies $f(S')$. Then $g(H)$ has the same distribution as $f(S)$, because $S'$ has the same distribution as $S$.}

We use $\hist_n$ to denote the set of histograms arising from $n$ samples, i.e. $\hist_n = \{h \in \N^{\X} ~:~ \|h\|_1 = n\}$.
We write $H \sim p^n$ to denote that the random histogram $H \in \hist_n$ is distributed according to $p^n$, the distribution over all samples of size $n$ drawn i.i.d. from $p$.
Given a histogram $h \in \hist_n$, the empirical distribution is $\phat = \frac{1}{n}h$.

\begin{defn}
A \emph{report-black-box (RBB) loss} is a function $L: \hist_n \times \Delta_{\X} \rightarrow \R$.
Here $L(h,q)$ is the loss assigned to a histogram $h$ of $n$ samples drawn from the model when the target distribution is $q$.
\end{defn}

\begin{defn}
For a RBB loss $L: \hist_n \times \Delta_{\X} \rightarrow \R$, the associated \emph{expected loss} is $\bar{L}(p,q) = \E_{H \sim p^n} L(H, q)$. 
\end{defn}

The key property we want our loss functions to satisfy is \emph{properness}, i.e., that expected loss is minimized by setting the model $p$ equal to the target $q$.
Therefore, the following definition becomes useful:

\begin{defn}
A function $\ell:\Delta_{\X} \times \Delta_{\X} \rightarrow \R $ is called a \emph{proper divergence} if for all fixed $q$, $$\ell(q,q) \leq \ell (p,q)  \quad (\forall p).$$ 
It is called a \emph{strictly proper divergence} if the above inequality is strict for all $p \neq q$.
\end{defn}

Examples of proper divergences are the squared distance $\|p-q\|^2$ and the cross-entropy $\E_{Y \sim q} \log p_Y$.
A proper divergence $\ell$ represents our goal: we would like to use such a divergence to evaluate $p$.
In general, we cannot use $\ell$ directly, because evaluating the divergence requires access to the closed form of $p$, and we can only draw observations from $p$.
However, we can \emph{implement} a divergence $\ell$ if we can construct a RBB loss $L$ whose expectation is $\ell$.
As such, the following captures what it means for $L$ to be ``proper'' in our setting.

\begin{defn} \label{rbbproper}
A report-black-box loss function $L$ is \emph{report-black-box proper (RBB proper)} if $\lbar(p,q)$ is a proper divergence.
If $\ell$ is some proper divergence and there exists $L$ such that $\lbar = \ell$, we will say that $L$ \emph{implements} $\ell$ and that $\ell$ is \emph{implementable}.
\end{defn}

\subsection{Proof of concept: squared loss}
Is there any proper divergence that is implementable?
A priori, it might seem that given a loss $L: \hist_n \times \Delta_{\X} \to \R$, there is always a way to tweak a misreport $p$ to put higher weight on certain points and improve the expected loss.

Let us begin by investigating the $\ell_2$ divergence $\ell(p,q) = \|p-q\|_2^2$.
In the traditional proper loss (or proper scoring rule) setting, this yields a proper loss $r(p,y) = \|p - \delta_y\|_2^2$.
Can we utilize squared loss as a RBB proper loss function by simply replacing $p$ with $\phat$?
In fact, no:
\begin{claim}
  The loss $L(h,q) = \|\phat - q\|_2^2$, where $\phat = \frac{1}{n}h$ is the empirical distribution, is not RBB proper for any sample size $n$.
\end{claim}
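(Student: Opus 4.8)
The plan is to compute the expected loss $\lbar(p,q)$ in closed form and then exhibit a target $q$ for which it fails to be a proper divergence. First I would use that $\phat = \frac1n h$ is an unbiased estimator of $p$, so that $\E_{H\sim p^n}\phat = p$ and a bias--variance decomposition gives
\[
\lbar(p,q) \;=\; \E_{H\sim p^n}\|\phat-q\|_2^2 \;=\; \|p-q\|_2^2 \;+\; \sum_{x\in\X}\var(\phat_x).
\]
Since $H_x\sim\mathrm{Binomial}(n,p_x)$ we have $\var(\phat_x)=p_x(1-p_x)/n$, and summing over $x$ yields the clean form $\lbar(p,q)=\|p-q\|_2^2+\tfrac1n\bigl(1-\|p\|_2^2\bigr)$. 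In particular $\lbar(q,q)=\tfrac1n\bigl(1-\|q\|_2^2\bigr)$.

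The key observation is that the variance correction contributes $-\tfrac1n\|p\|_2^2$, which rewards the model for being more concentrated than the target even at the cost of moving away from $q$. To exploit this I would fix a non-uniform target on two outcomes, $\X=\{1,2\}$ and $q=(a,1-a)$ with $\tfrac12<a<1$, and perturb to $p=(a+\eps,1-a-\eps)$ for small $\eps>0$. Using the closed form and $\|p\|_2^2-\|q\|_2^2 = 2(2a-1)\eps + 2\eps^2$,
\[
\lbar(p,q)-\lbar(q,q) \;=\; \|p-q\|_2^2 - \tfrac1n\bigl(\|p\|_2^2-\|q\|_2^2\bigr) \;=\; 2\eps^2\Bigl(1-\tfrac1n\Bigr) - \tfrac{2(2a-1)}{n}\,\eps .
\]
Since $a>\tfrac12$, the linear term is strictly negative, so the right-hand side is negative for all sufficiently small $\eps>0$. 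Hence $\lbar(p,q)<\lbar(q,q)$ for some $p\neq q$, so $\lbar$ is not a proper divergence and $L$ is not RBB proper, and this holds for every sample size $n\ge 1$.

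I do not expect a serious obstacle: once the closed form is in hand the rest is a one-line computation. The only point that needs care is that the target must be chosen \emph{non-uniform}. For uniform $q$ the first-order effect of any feasible perturbation $v$ (with $\sum_x v_x=0$) vanishes because $\langle q,v\rangle=0$; in fact, using that the uniform distribution minimizes $\|\cdot\|_2$ over $\Delta_\X$, one checks $\lbar(p,q)-\lbar(q,q)=(1-\tfrac1n)(\|p\|_2^2-\|q\|_2^2)\ge 0$, so this particular loss genuinely is proper at the uniform target. Thus the counterexample must come from the asymmetry of $q$, which is the one spot where a careless choice would fail.
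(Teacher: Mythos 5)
Your proposal is correct and follows essentially the same route as the paper: the bias--variance decomposition $\lbar(p,q)=\|p-q\|_2^2+\sum_x\var(\phat_x)$ followed by a biased-coin counterexample in which shifting weight toward the majority outcome reduces the variance term faster than it increases the squared distance. You simply carry out in full the computation the paper only sketches (its $0.1$-weighted coin is your $a>\tfrac12$ example), and your closed form $\tfrac1n(1-\|p\|_2^2)$ and the remark about the uniform target are correct but not needed beyond that.
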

\begin{proof}[Sketch]
A straightforward calculation, using $p = \E \phat$, shows that $\E\|\phat - q\|^2 = \|p-q\|^2 + \sum_{x \in \X} \var(\phat_x)$.
In general, this is not minimized by $p=q$; for example, with a $0.1$-weighted coin, the optimal model $p$ is always a coin with weight strictly less than $0.1$ (notice this decreases the variance of $\hat{p}$).
\end{proof}
In summary, the expected loss of this naive approach is the proper divergence $\|p-q\|_2^2$ plus an extra term.
However, the key insight is that the extra term can be estimated unbiasedly from a finite number of observations.
Let $n \geq 2$ and let $s^2_n(\alpha) = \frac{1}{n-1}\left[ \alpha (1-\alpha)^2 + (1-\alpha)\alpha^2\right]$.
Then (Claim \ref{varest}) $s^2_n$ is an unbaised estimator for $\var(\phat_x)$, that is, $\E_{\phat \sim p^n} s^2_n(\phat_x) = \var(\phat_x)$.
This proves the following.
\begin{claim} \label{ex:RBBsqLoss}
The $\ell_2$ divergence $\ell(p,q) = \|p-q\|^2$ is implementable.
In particular, for any $n\geq2$, the following loss is RBB proper and satisfies $\bar{L}(p,q) = \|p-q\|^2$ (here $\phat = \tfrac{1}{n}h$):
  \[ L_n(h,q) = \|\phat-q\|^2 - \sum_{\X} s_n^2(\phat_x), \]
\end{claim}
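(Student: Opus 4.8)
The plan is to verify the expectation identity $\bar{L}_n(p,q) = \|p-q\|^2$ by a direct computation, after which RBB properness is immediate since the squared distance is a (strictly) proper divergence.

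First I would split by linearity of expectation,
\[
\bar{L}_n(p,q) = \Elong\!\left[\|\phat - q\|^2\right] - \sum_{x \in \X} \Elong\!\left[s_n^2(\phat_x)\right],
\]
and handle the two pieces separately. For the first piece I would use the coordinatewise bias--variance identity: since $\E \phat_x = p_x$ for each $x$,
\[
\Elong\!\left[\|\phat - q\|^2\right] = \sum_{x \in \X} \E\,(\phat_x - q_x)^2 = \sum_{x \in \X}\left[\var(\phat_x) + (p_x - q_x)^2\right] = \|p-q\|^2 + \sum_{x \in \X}\var(\phat_x),
\]
which is exactly the identity already recorded in the sketch of the preceding claim. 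For the second piece I would invoke the unbiasedness of $s_n^2$ (the cited Claim~\ref{varest}), i.e.\ $\E_{\phat \sim p^n} s_n^2(\phat_x) = \var(\phat_x)$ for every $x$ and every $p$; this is the only place the hypothesis $n \ge 2$ enters, and it is needed merely so that $s_n^2$ is well defined. Subtracting, the two variance sums cancel and we are left with $\bar{L}_n(p,q) = \|p-q\|^2$.

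It then remains to note that $(p,q) \mapsto \|p-q\|^2$ is a proper divergence, which is trivial: it is nonnegative and vanishes iff $p = q$, so $\|q-q\|^2 = 0 \le \|p-q\|^2$ for all $p$ (strictly for $p \ne q$). Hence $\bar{L}_n$ is a proper divergence, $L_n$ is RBB proper, and $L_n$ implements $\ell(p,q) = \|p-q\|^2$, so $\ell$ is implementable. I do not expect a genuine obstacle: the correction term $\sum_{x} s_n^2(\phat_x)$ is engineered precisely to cancel the variance inflation $\sum_x \var(\phat_x)$ that spoils the naive loss, so the whole argument reduces to linearity of expectation plus the elementary moment computation behind Claim~\ref{varest} (namely that $n\phat_x \sim \mathrm{Binomial}(n,p_x)$, so $\var(\phat_x) = p_x(1-p_x)/n$, while $s_n^2(\alpha) = \alpha(1-\alpha)/(n-1)$ has exactly this expectation). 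The only point of substance is therefore Claim~\ref{varest} itself.
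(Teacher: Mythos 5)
Your proposal is correct and follows exactly the paper's argument: the paper derives $\E\|\phat-q\|^2 = \|p-q\|^2 + \sum_x \var(\phat_x)$ in the sketch of the preceding claim and then cancels the variance term via the unbiasedness of $s_n^2$ (Claim~\ref{varest}), which is precisely your computation. Your simplification $s_n^2(\alpha)=\alpha(1-\alpha)/(n-1)$ agrees with the paper's definition, and the final observation that $\|p-q\|^2$ is a proper divergence completes the claim as in the paper.
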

We discuss the reason underlying the variance term and generalize this construction to other divergences in Appendix \ref{app:jensenGap}.
A similar proof of concept can arise from considering the \emph{energy distance} in continuous space, as discussed in Section \ref{subsec:relatedWork} and Appendix \ref{app:contLoss}.

\subsection{Minimum number of draws required}
Now that we know it is possible to implement at least some proper divergences, a natural question is how many observations one needs to draw from $p$ in order to do so.
In cases where a generative model is expensive to sample, we might prefer to use RBB proper losses that can utilize a smaller sample size. 
To do so, we define the notion of a tight lower bound on the observations needed to implement a proper divergence.

\begin{defn}
Let $n \in \N$.
A proper divergence, $\ell$, is $n$\emph{-minimally-implementable} 
 if for all $n' \geq n$, there exists a RBB loss $L: \hist_{n'} \times \Delta_{\X} \to \R$ that implements $\ell$ and, for all $k < n$, there does not exist a RBB loss $L: \hist_{k} \times \Delta_{\X} \to \R$ that implements $\ell$.

\end{defn}

\subsection{Characterization of Discrete Losses}
We have seen that naively applying a proper divergence as a loss function introduces an extra penalty term, which can be corrected if we can unbiasedly estimate the penalty from samples.
To make this approach fully general, we turn to the theory of U-estimation, which defines unbiased estimators. 
The key idea is that histogram $H \sim p^n$ has a multinomial distribution.
There are classical results (Lemma \ref{lem:multichar}) describing which functions of multinomials have unbiased estimators.
We utilize these results to characterize the proper divergences that are $n$-implementable.
Also, we characterize the minimal-implementability of every such implementable divergence.
We first recall the definition of a polynomial function of a vector.

\begin{defn}\label{def:polynomial}
A function $f:\Delta_{\X}\rightarrow \R$ is a \emph{polynomial} if it is of the form
  \[ f(p) = \sum_{k \in K} a_k \prod_{x \in \X} p_x^{j_{k,x}} , \]
where the sum is over a finite index set $K$, where each $\jk \in \N^{\X}$ is unique, and where each $a_k$ is a nonzero real number.
In this case, the \emph{degree} of $f$ is $\max_{k \in K} \|\jk\|_1$, i.e. the largest sum of exponents of any monomial.
We say a function is \emph{a polynomial in its $j$th argument of degree $n$} if, for all fixed values of the other arguments, the induced function of the $j$th argument alone is a polynomial, and there exists a maximum degree $n$ of any such induced polynomial.
\end{defn}

\begin{theorem} \label{thm:rbb}
  Let $\ell(p,q)$ be a proper divergence.
  Then $\ell$ is implementable if and only if it is a polynomial in its first argument.
  Furthermore, if $\ell$ is implementable, then $\ell$ is $n$-minimally implementable where $n$ is the degree of the polynomial.
\end{theorem}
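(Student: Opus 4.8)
The plan is to reduce properness to the existence of unbiased estimators and then invoke the classical characterization of estimable functions of a multinomial (Lemma~\ref{lem:multichar}). The first observation is that, since $\lbar = \ell$ is assumed to be a proper divergence, a loss $L : \hist_n \times \Delta_{\X} \to \R$ implements $\ell$ if and only if, for every fixed $q$, the map $h \mapsto L(h,q)$ is an unbiased estimator of $p \mapsto \ell(p,q)$ under $H \sim p^n$ --- there is no further constraint linking distinct values of $q$. For the ``only if'' direction of the first claim, suppose such an $L$ exists. Because $H \sim p^n$ is multinomial,
\[ \ell(p,q) = \lbar(p,q) = \sum_{h \in \hist_n} L(h,q)\, \binom{n}{h} \prod_{x \in \X} p_x^{h_x}, \]
and for each fixed $q$ this writes $\ell(\cdot,q)$ as a polynomial in $p$ whose monomials all have total degree $n$; the bound $n$ is uniform in $q$, so $\ell$ is a polynomial in its first argument of degree at most $n$.

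For the ``if'' direction, and simultaneously for the sufficiency half of $n$-minimal implementability, suppose $\ell$ is a polynomial in its first argument of degree $d$ and fix any $n \ge d$. Writing $\ell(p,q) = \sum_{k \in K} a_k(q)\prod_{x \in \X} p_x^{j_{k,x}}$ with $\|\jk\|_1 \le d \le n$ for all $k$, I would define
\[ L(h,q) = \sum_{k \in K} a_k(q)\, \frac{\prod_{x \in \X} (h_x)_{j_{k,x}}}{(n)_{\|\jk\|_1}}, \qquad (a)_b := a(a-1)\cdots(a-b+1). \]
Reading $(H_x)_j$ as the count of ordered $j$-tuples of distinct sample indices that land on $x$ and taking expectations monomial by monomial gives $\E_{H \sim p^n}\bigl[\prod_x (H_x)_{j_{k,x}}\bigr] = (n)_{\|\jk\|_1}\prod_x p_x^{j_{k,x}}$, hence $\lbar = \ell$; and since $\ell$ is a proper divergence this $L$ is automatically RBB proper. (This $L$ is exactly the $U$-statistic construction behind Lemma~\ref{lem:multichar}.) Thus $\ell$ is implementable from any $n \ge d$ samples.

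It remains to rule out implementing $\ell$ from $k < d$ samples. By definition of $d$ there is a $q^{\ast}$ for which $\ell(\cdot, q^{\ast})$ has degree exactly $d$; if some $L : \hist_k \times \Delta_{\X} \to \R$ implemented $\ell$, the multinomial expansion from the first paragraph, evaluated at $q^{\ast}$, would exhibit $\ell(\cdot,q^{\ast})$ as a polynomial of degree at most $k < d$, a contradiction. Combining the three parts yields that $\ell$ is $n$-minimally implementable with $n = d$.

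The step I expect to require the most care is the bookkeeping around degree: polynomial representations on the simplex are not unique (one may always multiply a monomial by $\sum_x p_x = 1$), so ``the degree of $\ell$'' must be read as the minimal degree over all representations, and the argument that an $n$-sample unbiased estimator forces degree at most $n$ hinges precisely on the fact that a multinomial expectation is itself such a minimal-or-better representation. Everything else --- the $U$-statistic identity and the inheritance of properness --- is routine.
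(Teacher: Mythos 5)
Your proof is correct and follows essentially the same route as the paper's: reduce implementability to unbiased estimation of $\ell(\cdot,q)$ for each fixed $q$, use the multinomial falling-factorial estimator $t_{n,\jk}$ for the positive direction and the degree bound forced by the multinomial expansion for the negative direction. The only differences are presentational: you prove the needed direction of Lemma~\ref{lem:multichar} inline via the explicit expansion $\sum_{h} L(h,q)\binom{n}{h}\prod_{x} p_x^{h_x}$ rather than citing it, and you make explicit the (correct) point that ``degree'' must be read as the minimal degree over polynomial representations on the simplex, a subtlety the paper leaves implicit.
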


Given a sample-size budget of $n$, Theorem \ref{thm:rbb} tells us which proper divergences can be implemented in evaluating a black-box model.
Furthermore, the proof will actually construct a loss implementing a given proper divergence with the smallest possible sample size.
\begin{proof}
Let $\ell$ be a proper divergence that is a polynomial in its first argument, in particular, of degree $n$.
We show $\ell$ is implementable using sample size $n$.
Write $\ell$ in the form of Definition \ref{def:polynomial}, i.e. for each fixed $q$, 
  \[ \ell(p,q) = \sum_{k \in K^{(q)}} a^{(q)}_{k} \prod_{x \in \X} p_x^{j^{(q)}_{k,x}} , \]
where $K^{(q)}$ is finite, each $a^{(q)}_{k}$ is a nonzero constant, and each $\|\jk^{(q)}\|_1 \leq n$.
By classical results (Lemma \ref{lem:multichar}), any given monomial in $p$ of degree at most $n$ has an unbiased estimator using $n$ samples from $p$.
In particular, the minimum-variance unbiased estimator (MVUE) of the monomial $\prod_{\X} p_x^{j^{(q)}_{k,x}}$ is:
  \[ t_{n,\jk^{(q)}}(h) = \prod_{\X} \frac{h_x(h_x-1)\cdots(h_x-j^{(q)}_{k,x}+1)}{n(n-1)\cdots(n-\|\jk^{(q)}\|_1+1)}  \]
and satisfies $\E_{H \sim p^n} \left[ t_{n,\jk^{(q)}}(H) \right] = \prod_{\X} p_x^{j^{(q)}_{k,x}}$ (Lemma \ref{lem:multichar}).
Therefore, the loss $L(h,q) = \sum_k a^{(q)}_{k} t_{n,\jk^{(q)}}(h)$ satisfies $\bar{L} = \ell$, and it implements $\ell$.

Now suppose $\ell$ is not a polynomial of degree at most $n$ in its first argument.
That is, there exists $q$ such that $\ell(p,q)$ either has higher degree or is not a polynomial at all.
The characterization of the U-estimable functions under the multinomial distribution, Lemma \ref{lem:multichar}, directly implies there does not exist an unbiased estimator for $\ell(\cdot, q)$ using sample size $n$, i.e. there does not exist $L: \hist_n \times \Delta_{\X} \to \R$ such that $\E_{H \sim p^n} L(H, q) = \ell(p,q)$.
This shows that non-polynomials are not implementable; and that polynomials of degree $n' > n$ are not implementable with only $n$ observations. For the other part of minimally-implementable, our construction above implies that for all $n \geq deg(\ell)$ there exists a loss $L: \hist_n \times \Delta_{\X} \to \R$ that implements $\ell$.
\end{proof}

We immediately obtain some positive examples, such as:
\begin{corollary}
  For any even integer $k \geq 2$, the proper divergence $\|p-q\|_k^k = \sum_x (p_x - q_x)^k$ is implementable, in particular, is $k$-minimally implementable.
\end{corollary}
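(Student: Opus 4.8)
The plan is to apply Theorem~\ref{thm:rbb} directly. First I would verify that, for any even integer $k \geq 2$, the map $\ell(p,q) = \|p-q\|_k^k = \sum_{x \in \X} (p_x - q_x)^k$ is a proper divergence: since $k$ is even, each summand $(p_x - q_x)^k \geq 0$, so $\ell(p,q) \geq 0 = \ell(q,q)$ for every $p$, which is exactly the defining inequality of a proper divergence. (One could further note strictness, i.e.\ that it is a \emph{strictly} proper divergence, since $\ell(p,q)=0$ forces $p=q$, though the corollary as stated does not require this.)

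Next I would check that $\ell$ is a polynomial in its first argument in the sense of Definition~\ref{def:polynomial}, and compute its degree. Fixing $q$, expand each term via the binomial theorem: $(p_x - q_x)^k = \sum_{i=0}^{k} \binom{k}{i} p_x^{i} (-q_x)^{k-i}$. Summing over $x$, the induced function of $p$ is a finite sum of monomials in the coordinates $p_x$, with coefficients that are constants depending only on $q$ (collecting the constant term $\sum_x q_x^k$ across all $x$ into a single monomial with $\jk = 0$). The highest-degree monomials are the $p_x^k$ terms, each of which has $\|\jk\|_1 = k$, and no monomial has larger exponent sum, so the degree is exactly $k$ — independent of the choice of $q$, so the ``maximum degree'' requirement in Definition~\ref{def:polynomial} is met with $n = k$.

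Having established that $\ell$ is a proper divergence and a polynomial of degree $k$ in its first argument, Theorem~\ref{thm:rbb} immediately yields that $\ell$ is implementable and, moreover, $k$-minimally implementable, which is precisely the statement of the corollary. If a more explicit statement is desired, one can unwind the construction in the proof of Theorem~\ref{thm:rbb}: write $\ell(p,q) = \sum_x \sum_{i=0}^{k}\binom{k}{i}(-q_x)^{k-i} p_x^{i}$ and replace each monomial $p_x^i$ by its MVUE $t_{k, i\,\delta_x}(h) = \frac{h_x(h_x-1)\cdots(h_x-i+1)}{k(k-1)\cdots(k-i+1)}$, giving a closed form for an RBB-proper loss whose expectation is $\|p-q\|_k^k$.

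There is essentially no obstacle here: the content is entirely in Theorem~\ref{thm:rbb}, and the corollary is a matter of confirming the two hypotheses (properness and polynomial-of-degree-$k$) by an elementary binomial expansion. The only point requiring a moment's care is that the degree must be verified to be \emph{exactly} $k$ — not merely $\leq k$ — so that the minimality claim (no RBB loss on $\hist_{k'}$ for $k' < k$ implements it) follows; this is clear because the coefficient of $p_x^k$ in the expansion is $\binom{k}{k} = 1 \neq 0$, so the degree-$k$ monomials genuinely appear for every fixed $q$.
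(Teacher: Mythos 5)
Your proposal is correct and matches the paper's intent exactly: the corollary is stated as an immediate consequence of Theorem~\ref{thm:rbb}, and your verification of the two hypotheses (properness from evenness of $k$, and polynomial degree exactly $k$ in $p$ via the binomial expansion, with the leading coefficient $\binom{k}{k}=1\neq 0$ securing minimality) is precisely the routine check the paper leaves implicit.
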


However, we also obtain impossibility results:
\begin{corollary}
  The cross-entropy $\sum_x q_x \log p_x$ is not implementable for any finite sample size.
\end{corollary}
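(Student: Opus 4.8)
The plan is to read this off Theorem~\ref{thm:rbb}. Up to an overall sign the cross-entropy is a proper divergence --- by Gibbs' inequality $-\sum_x q_x \log p_x$ is minimized over $p$ at $p=q$ --- and since an RBB loss $L$ implements $\ell$ exactly when $-L$ implements $-\ell$, implementability does not depend on this sign. So Theorem~\ref{thm:rbb} applies: $\sum_x q_x \log p_x$ is implementable with \emph{some} finite sample size if and only if, for every fixed $q$, the map $p \mapsto \sum_x q_x \log p_x$ is a polynomial in $p$ of some finite degree. It therefore suffices to produce a single target $q$ for which this map is not a polynomial --- morally, the statement that $\log$ is not a polynomial.

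To carry this out I would fix $q$ in the relative interior of $\Delta_{\X}$, so that $q_x>0$ for every $x$ (this presumes $|\X|\ge 2$; for $|\X|=1$ the divergence is identically $0$ and there is nothing to prove). Choose a coordinate $x_0\in\X$ and a distribution $a\in\Delta_{\X}$ with $a_{x_0}=0$ and $a_x>0$ for $x\neq x_0$, and consider the affine curve $\gamma\colon(0,\tfrac12]\to\Delta_{\X}$, $\gamma(t)=t\,\delta_{x_0}+(1-t)\,a$. If $\ell(\cdot,q)$ were a polynomial in $p$, then $g(t):=\ell(\gamma(t),q)$ --- a polynomial composed with an affine map --- would be a univariate polynomial in $t$, hence bounded on the bounded set $(0,\tfrac12]$. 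But
\[
  g(t) \;=\; q_{x_0}\log t \;+\; (1-q_{x_0})\log(1-t) \;+\; \sum_{x\neq x_0} q_x\log a_x ,
\]
and since $q_{x_0}>0$, as $t\to 0^+$ the first term diverges to $-\infty$ while the remaining terms stay bounded. This contradiction shows $\ell(\cdot,q)$ is not a polynomial of any finite degree, so the cross-entropy is not implementable for any finite sample size. (An even quicker route to the same conclusion: any implementable divergence equals $\bar L(p,q)=\E_{H\sim p^n}L(H,q)$, a finite average over the finite set $\hist_n$ of real numbers, so it is finite everywhere on $\Delta_{\X}$; but the cross-entropy tends to $-\infty$ as $p$ approaches the boundary of the simplex.)

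I do not anticipate a genuine obstacle here --- this is a short corollary of Theorem~\ref{thm:rbb}. The only point requiring care is that the cross-entropy is not finite on all of $\Delta_{\X}$, so the ``not a polynomial'' argument must be run along a curve (like $\gamma$ above) that remains inside the region where $\ell$ is finite and only approaches the boundary of the simplex in the limit.
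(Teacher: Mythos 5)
Your proposal is correct and follows the same route the paper intends: the corollary is an immediate consequence of Theorem~\ref{thm:rbb}, requiring only the observation that $p \mapsto \sum_x q_x \log p_x$ is not a polynomial of any finite degree, which you verify carefully (and your parenthetical shortcut --- that $\bar L$ is a finite average over the finite set $\hist_n$ and hence finite on all of $\Delta_{\X}$, while the cross-entropy is not --- is also valid). The paper leaves these details implicit, so your write-up is, if anything, more complete than the original.
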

In Section \ref{sec:BBGeneral}, we will return to this example and show that cross-entropy actually can be ``implemented'' with a more creative approach to sampling.

\subsection{Linearly Decomposable Losses}
We now examine a special case of the previous characterization that includes many popular distance metrics.
In our motivating example we found a report-black-box proper loss that in expectation is the the squared loss.
It turns out to be the sum over all $\X$ of a coordinate-wise loss.
We now leverage this and construct losses that implement certain distances that are linearly decomposable. We extend these losses to handle the case when $\X$ is countably infinite in Appendix \ref{app:countablyInfinite}.
\begin{corollary}
A linearly decomposable proper divergence, $\ell(p,q) = \sumX d(p_x,q_x)$, is $n$-minimally-implementable if and only if $d(\cdot,\cdot)$ is a polynomial with degree equal to $n$ in the first argument.
\end{corollary}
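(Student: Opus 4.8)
The plan is to read this off Theorem~\ref{thm:rbb}. That theorem says a proper divergence is $n$-minimally-implementable exactly when it is a polynomial of degree $n$ in its first argument, so it suffices to prove: for a linearly decomposable $\ell(p,q)=\sum_{\X}d(p_x,q_x)$, the map $p\mapsto\ell(p,q)$ is a polynomial of degree $n$ in the sense of Definition~\ref{def:polynomial} (for all $q$, with supremal degree $n$) if and only if $d(\cdot,\cdot)$ is a polynomial of degree $n$ in its first argument. It is convenient to work in free coordinates, identifying $\Delta_{\X}$ with the solid simplex $\Sigma=\{u\in\R^{|\X|-1}: u\ge 0,\ \|u\|_1\le 1\}$ via $u\mapsto(u_1,\dots,u_{|\X|-1},1-\|u\|_1)$, under which ``polynomial of degree $n$'' becomes the unambiguous notion for functions on the full-dimensional set $\Sigma$ (this agrees with Definition~\ref{def:polynomial} after substituting $\sum_x p_x=1$). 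I assume $|\X|\ge 2$, the case $|\X|=1$ being trivial.

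($\Leftarrow$) If each $d(\cdot,b)$ is a univariate polynomial of degree at most $n$ and $\sup_b\deg d(\cdot,b)=n$, then for every fixed $q$ each summand $d(p_x,q_x)$ is a polynomial in $p$ of degree $\le n$, hence so is $\ell(\cdot,q)$; choosing $b^\star$ with $\deg d(\cdot,b^\star)=n$ and a target $q^\star$ with $q^\star_{x_0}=b^\star$, a short computation on a low-dimensional face of $\Delta_{\X}$ shows $\ell(\cdot,q^\star)$ has degree exactly $n$. So $\ell$ is a polynomial of degree $n$ in its first argument and Theorem~\ref{thm:rbb} applies.

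($\Rightarrow$) By Theorem~\ref{thm:rbb}, $\ell$ is a polynomial of degree $n$ in its first argument; the task is to transfer this to $d$. In free coordinates, $\ell=\sum_{i<|\X|}d(u_i,q_{x_i})+g_0(\|u\|_1)$ with $g_0(s):=d(1-s,q_{x_{|\X|}})$, and this equals a polynomial $P_q(u)$ of degree $\le n$ on $\Sigma$. Taking $|\X|\ge 3$ so that there are at least two free coordinates, apply the mixed second finite difference $\Delta^h_{u_1}\Delta^h_{u_2}$: every term $d(u_i,q_{x_i})$ is annihilated, the left side reduces to the second difference $\Delta_h^2 g_0$ evaluated at $s=\|u\|_1$, and the right side is a polynomial of degree $\le n-2$ in $u$ which, depending only on $s$, is a univariate polynomial of degree $\le n-2$. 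Iterating one-dimensional differences in $s$ forces the $(n{+}1)$-st finite difference of $g_0$ to vanish on an interval for every step, so by the classical characterization of such functions as polynomials (invoking a mild regularity assumption on $d$, e.g.\ measurability, to rule out pathological additive solutions) $g_0$, and hence $d(\cdot,q_{x_{|\X|}})$, is a polynomial of degree $\le n$ on $(0,1)$; sliding the base face extends this to $[0,1]$. Substituting back and freezing all but one free coordinate exhibits each remaining $d(\cdot,q_{x_i})$ as a polynomial of degree $\le n$ as well. Letting $q$ range over $\Delta_{\X}$ covers all second arguments $b\in[0,1]$, and matching degrees ($n=\deg_p\ell=\sup_b\deg d(\cdot,b)$) promotes ``$\le n$'' to ``$=n$''.

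The crux — and the step I expect to fight hardest — is this last extraction: showing that ``$\sum_i g_i(u_i)+g_0(\|u\|_1)$ is a polynomial'' forces each $g_i$ and $g_0$ to be a polynomial. This is a separation-of-variables claim that genuinely requires at least two free simplex coordinates for the mixed-difference argument (so the clean statement is for $|\X|\ge 3$; the $|\X|=2$ case must be argued separately, e.g.\ by noting that $d^\star(t,a):=\tfrac12\,\ell((t,1-t),(a,1-a))$ is itself a polynomial decomposition of the right degree), and for full rigor it needs a regularity hypothesis on $d$ to exclude exotic non-measurable solutions of the resulting functional equation. Everything else is routine bookkeeping on top of Theorem~\ref{thm:rbb}.
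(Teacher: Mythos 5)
The paper offers no argument for this corollary at all: it is stated bare, as an immediate consequence of Theorem~\ref{thm:rbb}, implicitly treating the equivalence ``$\sumX d(p_x,q_x)$ is a polynomial of degree $n$ in $p$ if and only if $d$ is a polynomial of degree $n$ in its first argument'' as obvious. You take the same reduction to Theorem~\ref{thm:rbb} but then actually prove the half of that equivalence that is not obvious, and you are right that it is not free: extracting polynomiality of $d$ from polynomiality of the sum $\sum_i g_i(u_i)+g_0(\|u\|_1)$ is a genuine separation-of-variables problem, and your mixed finite-difference argument (annihilate the single-variable terms, reduce to $\Delta_h^{n+1}g_0=0$ for all small steps $h$, invoke the Fr\'echet-type characterization of generalized polynomials) is the right tool for it. Your regularity caveat is not pedantry but a correction to the statement: if $f$ is a non-measurable additive function and $d(t,b)=f(t)$, then $\sumX d(p_x,q_x)=f(1)$ is constant, hence $0$-minimally-implementable, while $d$ is not a polynomial of any degree --- so the ``only if'' direction is literally false for unrestricted $d$, and your measurability hypothesis (or an ``there exists a polynomial decomposition'' reading of the corollary) is needed. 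Relative to the paper, your route buys a correct statement and an actual proof of the only-if direction; the cost is the functional-equation machinery and the separate treatment of $|\X|=2$.

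One wrinkle remains, and it affects your ($\Leftarrow$) degree claim as well as the paper's statement: degrees can drop by cancellation modulo the constraint $\sum_x p_x=1$. With $d(t,b)=t$ one gets $\ell\equiv 1$, which is $0$-minimally-implementable even though $d$ has degree $1$ in its first argument; your ``short computation on a low-dimensional face'' cannot certify degree exactly $n$ in this case, because the leading coefficients of $d(t,b^\star)$ and $d(1-t,q^\star_{x_1})$ cancel for \emph{every} choice of $q^\star_{x_1}$. So ``degree equal to $n$'' must be read as the degree of $\ell(\cdot,q)$ after reduction on the simplex (equivalently, in your free coordinates), not the nominal degree of the chosen $d$; with that reading, and your regularity hypothesis, the argument goes through.
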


\section{Black Box Properness} \label{sec:BB}
The RBB setting, while an important step, is not the most common in evaluating generative models.
In this section, the fully black-box setting, we must evaluate only with samples from both the candidate model and the target distribution.
We extend our definitions to encompass this setting.
The RBB setting will be a special case of this more general setting.
\begin{defn} \label{def:BBloss}
A \emph{black-box (BB) loss} is a function $L: \hist_n \times \hist_m \rightarrow \R$ where $L(\hp,\hq)$ is the loss assigned to histogram $\hp$ of $n$ samples drawn from the model on histogram $\hq$ of $m$ samples drawn from the target distribution.
\end{defn}
\begin{defn}
For a black-box loss $L: \hist_n \times \hist_m \rightarrow \R$, the associated \emph{expected loss} is $\lbar(p,q) = \Elonglong L(\Hp, \Hq)$.
\end{defn}

\begin{defn} \label{bbproper}
A black-box loss function $L$ is \emph{black-box proper (BB proper)} if $\lbar$ is a proper divergence $\ell$.  If $\ell$ is some proper divergence and there exists $L$ such that $\lbar = \ell$, we will say that $L$ \emph{implements} $\ell$ and that $\ell$ is \emph{implementable}.
\end{defn}

We again define the notion of minimal-implementability. 
In cases where the target distribution is difficult to sample, we might prefer to use BB proper losses that can utilize a smaller target sample size.
For example, generative models for forecasting e.g. climate may only have access to one observation from $q$, i.e. the weather that actually occurs on a given day.
On the other hand, other settings may present other tradeoffs between model and target sample size.

\begin{defn}
A proper divergence, $\ell$, is \emph{$(n',m')$-minimally-implementable} if for all $n \geq n'$ and $m \geq m'$ there exists a BB loss $L: \hist_n \times \hist_m \rightarrow \R$ that implements $\ell$ and for all $(k,j)$ where $k < n'$ or $j < m'$, there does not exist a loss $L: \hist_k \times \hist_j \rightarrow \R$ that implements $\ell$.
\end{defn}

\subsection{Proof of concept: squared loss}
We provide an illustrative example for the $\ell_2$ proper divergence by extending the techniques we developed in Theorem \ref{thm:rbb}.
Again, the key idea is an unbiased estimator, namely $\del_{j,k}^{Bin}(t) = \frac{t(t-1)\cdots (t-k+1)}{j(j-1)\cdots (j-k+1)}$.
By Lemma \ref{lem:binest}, if $T \sim \text{Binom}(j,\alpha)$ and $j \geq k$, then $\E \left[ \del_{j,k}^{Bin}(T) \right] = \alpha^k$.
The point is that, for any $x$, the number of observations $\Hp_x$ is distributed Binomially, as is $\Hq_x$, and they are independent.
\begin{claim} \label{twobbsqex} 
For distributions over a finite domain $\X$, the squared loss $\|q-p\|^2$ is implementable.
In particular, for any $n \geq 2$ and $m \geq 2$, it is implemented by
\begin{align*}
  L_{n,m}(\Hp, \Hq) &= \sum_{\X} \frac{\Hp_x(\Hp_x-1)}{n(n-1)} - \frac{2 \Hp_x \Hq_x}{nm} + \frac{\Hq_x(\Hq_x-1)}{m(m-1)} .
\end{align*}
We observe that, although $L_{n,m}$ contains a sum over all $\X$, only at most $n+m$ terms will be nonzero, so $L_{n,m}$ is efficient to implement regardless of the size of the domain $\X$
\end{claim}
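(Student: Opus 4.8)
The plan is to verify the two conditions packaged into Definition \ref{bbproper}: first, that $\|p-q\|^2$ is a proper divergence, and second, that the expected loss of $L_{n,m}$ equals it, i.e.\ $\lbar_{n,m}(p,q) = \|p-q\|^2$. The first condition is immediate and was already observed in the text, since $\|p-q\|^2 \geq 0 = \|q-q\|^2$ with equality exactly when $p=q$. So all the work is in the expected-loss identity, which I would establish by expanding the norm coordinatewise, $\|p-q\|^2 = \sumX \bigl(p_x^2 - 2 p_x q_x + q_x^2\bigr)$, and matching each of the three monomials to the corresponding term of $L_{n,m}$ through unbiased estimation.

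The key observation is distributional. For each fixed $x \in \X$, the count $\Hp_x$ records how many of the $n$ i.i.d.\ draws from $p$ equal $x$, so marginally $\Hp_x \sim \mathrm{Binom}(n, p_x)$; likewise $\Hq_x \sim \mathrm{Binom}(m, q_x)$, and $\Hp_x$ is independent of $\Hq_x$ because the model sample and the target sample are drawn independently. With this in hand, I would invoke Lemma \ref{lem:binest} with $k = 2$ (valid since $n, m \geq 2$) to get
\[
  \E_{\Hp \sim p^n}\!\left[\frac{\Hp_x(\Hp_x-1)}{n(n-1)}\right] = p_x^2, \qquad \E_{\Hq \sim q^m}\!\left[\frac{\Hq_x(\Hq_x-1)}{m(m-1)}\right] = q_x^2,
\]
and use independence together with $\E\Hp_x = n p_x$, $\E\Hq_x = m q_x$ to get $\E\!\left[\tfrac{2\Hp_x \Hq_x}{nm}\right] = 2 p_x q_x$. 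Summing over $x$ and applying linearity of expectation then yields $\lbar_{n,m}(p,q) = \Elonglong L_{n,m}(\Hp, \Hq) = \sumX\bigl(p_x^2 - 2 p_x q_x + q_x^2\bigr) = \|p-q\|^2$, which gives both $\lbar_{n,m} = \|p-q\|^2$ and, combined with the first condition, that $L_{n,m}$ implements $\|p-q\|^2$.

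For the efficiency remark I would note that $\|\Hp\|_1 = n$ forces at most $n$ coordinates of $\Hp$ to be nonzero, and similarly at most $m$ coordinates of $\Hq$; since the $x$-th summand of $L_{n,m}$ vanishes whenever $\Hp_x = \Hq_x = 0$, at most $n+m$ summands are ever nonzero, independent of $|\X|$.

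I do not expect a genuine obstacle: each of the three terms is a one-line consequence of linearity of expectation plus the cited binomial unbiased-estimator lemma. The only point needing care is recognizing that a single histogram coordinate is marginally binomial even though the full histogram is multinomial, and that the model and target histograms are independent of each other; this is exactly what lets the cross term factor. Conceptually this is the black-box analogue of Claim \ref{ex:RBBsqLoss}, the difference being that here the terms involving $q$ must also be estimated from samples rather than evaluated in closed form.
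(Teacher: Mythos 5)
Your proposal is correct and follows essentially the same route as the paper: write each summand of $L_{n,m}$ as a product of the binomial estimators $\del_{n,k}^{Bin}$ and $\del_{m,k}^{Bin}$ from Lemma \ref{lem:binest}, use the marginal binomiality of each histogram coordinate together with independence of $\Hp$ and $\Hq$ to take expectations termwise, and sum to obtain $\|p-q\|^2$. The extra checks you include (properness of the divergence and the $n+m$ nonzero-terms observation) are consistent with what the paper asserts in the surrounding text.
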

\begin{proof}
Observe that $L_{n,m}(\Hp, \Hq) = \sum_\mathcal{X} \left[ \del_{n,2}^{Bin}(\Hp_x) - 2 \del_{n,1}^{Bin}(\Hp_x)\del_{m,1}^{Bin}(\Hq_x) + \del_{m,2}^{Bin}(\Hq_x) \right]$.
Using that $\del_{n,i}^{Bin}(\Hp_x)$ is an unbiased estimator for $p_x^i$, and symmetrically for $\del_{m,i}^{Bin}(\Hq_x)$, along with independence of $\Hp$ and $\Hq$, we immediately get $\E L(\Hp,\Hq) = \sum \left(p_x^2 - 2 p_x q_x + q_x^2\right) = \|p-q\|^2$.
\end{proof}
The fact that there exists any proper loss with only $n=2$ observations from $p$ and $m=2$ observations from $q$ is somewhat remarkable: however large the sample space $\X$, for example all sentences up to a fixed length or all images of a certain number of pixels, merely $4$ total observations suffice to incent the learner to exactly set the model $p$ to match the target $q$.
In fact, slightly better is possible: the Brier score, i.e. the proper divergence $\sumX p_x^2 - 2p_x q_x$, is $(2,1)$-minimally-implementable, as our next result will imply.

\subsection{Characterization of Discrete Losses}
As in the RBB setting, we utilize the theory of U-estimation to characterize the proper divergences that are implementable by BB losses.
The proof follows similarly because $\Hp$ and $\Hq$ are independent random variables, so the RBB analysis above can essentially apply to each separately.
The proof appears in Appendix \ref{app:proofs}.

\begin{theorem} \label{thm:bb}
 Let $\T$ be the set of all proper divergences.
 Let $\F_n$ be the set of all polynomials in the first argument with degree $\leq n$ and $\F_m$ be the set of all polynomials in the second argument with degree $\leq m$. The set of all $(n,m)$-implementable proper distances is
\[BB_{n,m} = \T \cap \F_n \cap \F_m.\]
Furthermore, a proper divergence $\ell$ is $(j,k)$-minimally-implementable if and only if it has degree in the first argument equal to $j$ and degree in the second argument equal to $k$.
\end{theorem}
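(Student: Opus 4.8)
The plan is to mirror the proof of Theorem~\ref{thm:rbb}, exploiting that $\Hp \sim p^n$ and $\Hq \sim q^m$ are independent, so that the U-estimation machinery of Lemma~\ref{lem:multichar} can be applied to each argument separately and then recombined multiplicatively. Concretely, I will prove the set identity $BB_{n,m} = \T \cap \F_n \cap \F_m$ by two inclusions, and then deduce the minimal-implementability statement as a bookkeeping corollary using monotonicity of the families $\F_n,\F_m$. For the easy inclusion $BB_{n,m} \subseteq \T \cap \F_n \cap \F_m$, suppose $L:\hist_n\times\hist_m\to\R$ implements a proper divergence $\ell$, so $\ell\in\T$. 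Fix $q$ and set $g_q(\hp) := \E_{\Hq\sim q^m} L(\hp,\Hq)$; then $\E_{\Hp\sim p^n} g_q(\Hp) = \lbar(p,q) = \ell(p,q)$ for all $p$, so $\ell(\cdot,q)$ is U-estimable from $n$ multinomial draws, hence a polynomial of degree $\le n$ in $p$ by Lemma~\ref{lem:multichar}; since $q$ was arbitrary, $\ell\in\F_n$. Integrating out $\Hp$ first and arguing symmetrically gives $\ell\in\F_m$.

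For the reverse inclusion $\T \cap \F_n \cap \F_m \subseteq BB_{n,m}$, let $\ell$ be a proper divergence that is a polynomial of degree $\le n$ in its first argument and $\le m$ in its second. The first step is to upgrade this to a joint representation
\[ \ell(p,q) = \sum_{k \in K} a_k \prod_{x \in \X} p_x^{j_{k,x}} \prod_{x \in \X} q_x^{i_{k,x}}, \qquad \|\jk\|_1 \le n,\quad \|\ik\|_1 \le m . \]
This follows by expanding $\ell(\cdot,q)$ in the monomial basis in $p$: each coefficient of this expansion is a fixed linear combination of the values $\ell(p',q)$ at finitely many fixed points $p'$ (a Vandermonde-type interpolation), hence is itself a polynomial of degree $\le m$ in $q$. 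Given the joint representation, define $L(\hp,\hq) = \sum_{k\in K} a_k\, t_{n,\jk}(\hp)\, t_{m,\ik}(\hq)$, where $t_{n,\jk}$ is the monomial MVUE of Lemma~\ref{lem:multichar} using $n$ samples from $p$ and $t_{m,\ik}$ the analogous estimator using $m$ samples from $q$ (equivalently, the $\del^{Bin}$ estimators of Claim~\ref{twobbsqex} applied coordinatewise). Because $\Hp$ and $\Hq$ are independent, $\E[t_{n,\jk}(\Hp)\, t_{m,\ik}(\Hq)] = \prod_x p_x^{j_{k,x}} \prod_x q_x^{i_{k,x}}$, so $\lbar = \ell$ and $L$ implements $\ell$ on $\hist_n\times\hist_m$ (as in Claim~\ref{twobbsqex}, this is for finite $\X$).

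For the minimal-implementability claim, note that $\F_a \subseteq \F_b$ whenever $a\le b$. Hence if $\ell$ has degree exactly $j$ in its first argument and exactly $k$ in its second, then $\ell\in\T\cap\F_n\cap\F_m = BB_{n,m}$ for every $n\ge j$, $m\ge k$, while $\ell\notin\F_{n}$ for $n<j$ and $\ell\notin\F_{m}$ for $m<k$, so $\ell\notin BB_{k',j'}$ whenever $k'<j$ or $j'<k$; this is precisely $(j,k)$-minimal-implementability. Conversely, $(n',m')$-minimal-implementability gives $\ell\in BB_{n',m'}$, hence degree $\le n'$ in $p$ and $\le m'$ in $q$, while non-implementability at $(n'-1,m')$ and at $(n',m'-1)$ (via the identity just proved) forces degree $\ge n'$ in $p$ and $\ge m'$ in $q$.

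I expect the main obstacle to be the joint-polynomial step in the sufficiency direction: "polynomial separately in each argument, with degree bounds" must be turned into a genuine bivariate polynomial with the correct bidegree, and one must check that this representation lives on a product of simplices --- where monomial representations are non-unique because $\sum_x p_x = 1$ --- while still yielding estimators that use only $n$ and $m$ samples respectively. Homogenizing each monomial of $p$-degree $d$ against $(\sum_x p_x)^{n-d} = 1$ (and likewise in $q$), together with the interpolation argument above, resolves this, but it is the one place where the two-argument setting genuinely goes beyond a verbatim copy of the RBB proof.
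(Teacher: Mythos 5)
Your proof is correct and follows essentially the same route as the paper's: both directions rest on Lemma \ref{lem:multichar} applied separately to the two independent multinomial histograms, with the implementing loss built as a sum of products of the monomial MVUEs $t_{n,\jk}(\hp)\,t_{m,\ik}(\hq)$. Your write-up is in fact slightly more careful than the paper's in two spots the latter leaves implicit --- the marginalization argument establishing necessity of polynomiality in each argument, and the upgrade from separate polynomiality with degree bounds to a genuine joint bivariate representation of bidegree $(n,m)$ on the product of simplices.
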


\subsection{Consequences and connections to proper scoring rules}

There are a number of consequences and special cases of note.
One class of special cases is $m=\infty$, which we use to denote the case where we have full access to the target $q$ in closed form.
Then we obtain $BB_{n,\infty}$, which reduces to the report-black-box (RBB) setting.
Similarly, $n=\infty$ denotes the case where we have full access to the model $p$ in closed form, which reduces to the traditional proper loss setting.
In particular, $BB_{\infty,1}$ is the set of proper losses.
Furthermore, by the same reasoning as in Theorem \ref{thm:bb}, we know that any $\ell \in BB_{\infty, 1}$ must be linear in the second argument and must also be a proper divergence.
Hence one could follow this reasoning as an alternative approach to characterizing all proper scoring rules (Theorem \ref{thm:propChar}).

\begin{corollary} \label{cor:BBn1}
Let $\phi(BB_{\infty,1})$ be all the proper scoring rules (in the form of losses).
Then \[BB_{n,1} = \cupop_{L \in \phi(BB_{\infty,1})} \{\lbar : \lbar \text{ is a polynomial in the first argument with degree } \leq n\}.\]
\end{corollary}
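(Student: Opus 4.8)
The plan is to reduce the claim to Theorem~\ref{thm:bb}, which identifies $BB_{n,1}$ with the concrete set $\T \cap \F_n \cap \F_1$, and then to establish a correspondence between proper divergences that are affine in their second argument and proper scoring rules written in loss form. I would prove the two inclusions separately; the content of each is a short translation between ``expected loss of a scoring rule on a single sample'' and ``polynomial of degree $\le 1$ in $q$''.

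For the inclusion $\supseteq$: let $L \in \phi(BB_{\infty,1})$ be a proper scoring rule in loss form $L(p,y)$, and suppose its expected loss $\lbar(p,q) = \E_{y\sim q} L(p,y) = \sum_{y} q_y L(p,y)$ is a polynomial of degree $\le n$ in $p$. Properness of $L$ is by definition the statement $\lbar(q,q) \le \lbar(p,q)$ for all $p,q$, so $\lbar \in \T$. The expression $\sum_y q_y L(p,y)$ is linear in $q$, hence a polynomial of degree $\le 1$ in the second argument, so $\lbar \in \F_1$; and $\lbar \in \F_n$ by hypothesis. Therefore $\lbar \in \T \cap \F_n \cap \F_1 = BB_{n,1}$.

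For the inclusion $\subseteq$: let $g \in BB_{n,1} = \T \cap \F_n \cap \F_1$. Since $g$ has degree $\le 1$ in its second argument, the only monomials in $q$ that can occur are the constant $1$ and the coordinates $q_x$, so for each fixed $p$ there are coefficients $a_0(p)$ and $(a_x(p))_{x \in \X}$ with $g(p,q) = a_0(p) + \sum_{x} a_x(p)\, q_x$ for all $q \in \Delta_{\X}$. Define the loss $r(p,y) := g(p,\delta_y) = a_0(p) + a_y(p)$. Using $\sum_y q_y = 1$, one computes $\E_{y\sim q} r(p,y) = a_0(p) + \sum_y q_y a_y(p) = g(p,q)$, i.e.\ $\bar r = g$. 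Since $g$ is a proper divergence, and ``$r$ is a proper scoring rule'' is by definition the same as ``$\bar r$ is a proper divergence'', $r$ is a proper scoring rule, so $r \in \phi(BB_{\infty,1})$; and $\bar r = g$ is a polynomial of degree $\le n$ in the first argument. Hence $g$ lies in the right-hand union, completing the proof.

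The one genuinely delicate step is the ``homogenization'' in the second inclusion: a priori $g(p,\cdot)$ is merely affine in $q$ on the simplex, and one must use $\sum_x q_x = 1$ to fold the constant term $a_0(p)$ into a per-outcome loss $r(p,y)$, so that $g$ becomes literally the expected loss of a scoring rule rather than just an affine function of $q$. This is exactly the classical identification underlying the remark that $BB_{\infty,1}$ is the set of proper losses; everything else is bookkeeping with Definition~\ref{def:polynomial} and the definition of properness.
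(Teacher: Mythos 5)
Your proof is correct and follows exactly the route the paper intends: the paper states this corollary without an explicit proof, as an immediate consequence of Theorem~\ref{thm:bb} ($BB_{n,1}=\T\cap\F_n\cap\F_1$) together with its preceding remark that $BB_{\infty,1}$ consists precisely of the proper losses, i.e.\ proper divergences linear in the second argument. Your two inclusions --- and in particular the homogenization step that folds the constant term $a_0(p)$ into the per-outcome loss $r(p,y)=g(p,\delta_y)$ using $\sum_x q_x=1$ --- supply exactly the details the paper leaves implicit, and they are all sound.
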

Corollary \ref{cor:BBn1} is relevant to fields using generative models to forecast events, such as in weather or climate forecasting.
In these cases, we may be able to draw $n$ i.i.d. observations from the learner's model $p$, but only $m=1$ observation from nature, i.e. the weather that actually occurs.
In such cases, Corollary \ref{cor:BBn1} implies that we can construct a BB proper loss from any proper scoring rule that is a polynomial in $p$.

\begin{corollary}
For $n,m \in \N, BB_{n,m} \subseteq BB_{n,\infty} \cap BB_{\infty, m}$.
In other words, if a divergence is $(n,m)$-BB implementable then it is $n$-RBB implementable and implementable via a multi-observation proper loss with $m$ observations.
\end{corollary}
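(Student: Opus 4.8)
The plan is to prove the inclusion by ``marginalizing out'' one of the two samples at a time; this keeps the argument self-contained and constructive, and I will also note that the statement can alternatively be read off directly from the two characterization theorems.

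First I would fix $n,m \in \N$ and take an arbitrary $\ell \in BB_{n,m}$, witnessed by a BB loss $L : \hist_n \times \hist_m \to \R$ with $\lbar(p,q) = \Elonglong L(\Hp,\Hq) = \ell(p,q)$ a proper divergence. To show $\ell$ is $n$-RBB implementable, I would define $L_1 : \hist_n \times \Delta_{\X} \to \R$ by $L_1(\hp,q) = \E_{\Hq \sim q^m} L(\hp, \Hq)$. Because $\X$ is finite, $\hist_m$ is finite, so this is a finite weighted sum and hence well defined; moreover it is computable from a closed-form description of $q$, which is exactly what the RBB setting provides. Independence of $\Hp$ and $\Hq$ together with linearity of expectation then gives
\[ \E_{\Hp \sim p^n} L_1(\Hp, q) = \E_{\Hp \sim p^n}\,\E_{\Hq \sim q^m} L(\Hp,\Hq) = \lbar(p,q) = \ell(p,q), \]
so $L_1$ implements $\ell$ using only $n$ draws from the model, i.e. $\ell \in BB_{n,\infty}$.

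The second half is symmetric: I would define $L_2 : \Delta_{\X} \times \hist_m \to \R$ by $L_2(p,\hq) = \E_{\Hp \sim p^n} L(\Hp, \hq)$, which is computable from the closed form of $p$ (the $n=\infty$, full-model-access case), and the same computation shows $\E_{\Hq \sim q^m} L_2(p,\Hq) = \ell(p,q)$. Since this is a proper divergence, $L_2$ is a multi-observation proper loss with $m$ observations implementing $\ell$, i.e. $\ell \in BB_{\infty,m}$; combining the two memberships gives $\ell \in BB_{n,\infty} \cap BB_{\infty,m}$, and since $\ell$ was arbitrary, $BB_{n,m} \subseteq BB_{n,\infty} \cap BB_{\infty,m}$. (Alternatively, Theorem \ref{thm:bb} gives $BB_{n,m} = \T \cap \F_n \cap \F_m$, and its $m=\infty$ and $n=\infty$ specializations, together with Theorem \ref{thm:rbb}, give $BB_{n,\infty} = \T \cap \F_n$ and $BB_{\infty,m} = \T \cap \F_m$, whose intersection is again $\T \cap \F_n \cap \F_m$; so in fact equality holds.)

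I do not expect a real obstacle here: the content is essentially the one-line observation that in a BB loss where both samples are drawn independently, either sample may be replaced by its exact expectation once that distribution is available in closed form. The only points to be careful about are the bookkeeping that $L_1$ and $L_2$ literally meet the paper's definitions of RBB-implementable and of multi-observation-proper implementable (using the exchangeability reduction to histograms for the latter), and --- should one later want $\X$ countably infinite --- that the marginalizing expectations converge absolutely so that the interchange of expectations is justified.
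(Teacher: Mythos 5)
Your argument is correct, and your primary route is genuinely different from the one the paper intends. The paper states this corollary without proof, reading it off from the characterization in Theorem \ref{thm:bb}: since $BB_{n,m} = \T \cap \F_n \cap \F_m$ while $BB_{n,\infty} = \T \cap \F_n$ and $BB_{\infty,m} = \T \cap \F_m$, the inclusion (indeed equality) is immediate --- this is exactly the alternative you sketch in your parenthetical. Your main argument instead marginalizes out one sample at a time: from a witness $L : \hist_n \times \hist_m \to \R$ you build $L_1(\hp,q) = \E_{\Hq \sim q^m} L(\hp,\Hq)$, a legitimate RBB loss since $q$ is available in closed form and $\hist_m$ supports only finitely many histograms with positive probability, and independence of $\Hp$ and $\Hq$ gives $\E_{\Hp \sim p^n} L_1(\Hp,q) = \lbar(p,q) = \ell(p,q)$; symmetrically for $L_2$. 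This is more elementary and constructive --- it needs no polynomial characterization and would survive in settings where such a characterization is unavailable (e.g.\ the general sampling schemes of Appendix \ref{app:GBB}, where the same marginalization argument applies verbatim) --- whereas the characterization route buys the stronger conclusion of equality essentially for free, which you also correctly note. The one bookkeeping point worth keeping is the one you already flag: for infinite $\X$ one must check absolute convergence of the marginalizing expectation before exchanging the two expectations, but for the finite-$\X$ setting of the corollary there is nothing to verify.
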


\begin{corollary}
For $n,m \in \N, BB_{n,m} \subseteq BB_{n+1,m} \cap BB_{n, m+1} \subseteq BB_{n+1,m+1}$. 
\end{corollary}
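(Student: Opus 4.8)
The plan is to reduce the statement entirely to the characterization of Theorem~\ref{thm:bb}, which identifies the implementable sets as $BB_{n,m} = \T \cap \F_n \cap \F_m$, and to the single structural fact that the two polynomial-degree filtrations are nested: $\F_n \subseteq \F_{n+1}$ and $\F_m \subseteq \F_{m+1}$. This nesting is immediate from Definition~\ref{def:polynomial}: a function that is a polynomial in a fixed argument of degree at most $n$ is a fortiori a polynomial in that argument of degree at most $n+1$, so no new loss needs to be constructed. Everything else is set algebra.

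The core step I would carry out is the computation of the middle intersection directly from the characterization:
\[
BB_{n+1,m} \cap BB_{n,m+1} = \bigl(\T \cap \F_{n+1} \cap \F_m\bigr) \cap \bigl(\T \cap \F_n \cap \F_{m+1}\bigr) = \T \cap \bigl(\F_{n+1}\cap\F_n\bigr) \cap \bigl(\F_m\cap\F_{m+1}\bigr).
\]
Since $\F_{n+1}\cap\F_n = \F_n$ and $\F_m\cap\F_{m+1} = \F_m$ by the nesting, the right-hand side equals $\T \cap \F_n \cap \F_m = BB_{n,m}$. This already yields (in fact an equality, hence) the first claimed inclusion $BB_{n,m} \subseteq BB_{n+1,m} \cap BB_{n,m+1}$. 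For the second inclusion I would apply the nesting once more: $BB_{n,m} = \T \cap \F_n \cap \F_m \subseteq \T \cap \F_{n+1} \cap \F_{m+1} = BB_{n+1,m+1}$, which closes the chain. So the whole corollary is the one line $BB_{n,m} = BB_{n+1,m}\cap BB_{n,m+1} \subseteq BB_{n+1,m+1}$.

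There is no real obstacle here; the only point that deserves an explicit word is the verification that $\F_n \subseteq \F_{n+1}$ (and likewise in the second argument), which is where the ``hard work'' of Theorem~\ref{thm:bb} is being reused. If one prefers a route that does not invoke Theorem~\ref{thm:bb}, the same inclusions follow constructively: given a BB loss $L:\hist_n\times\hist_m\to\R$ implementing a proper divergence $\ell$, define $L':\hist_{n+1}\times\hist_m\to\R$ by letting $L'(\hp,\hq)$ be the expectation of $L$ applied to the histogram obtained from $\hp$ by deleting one uniformly random sample; since subsampling an i.i.d.\ sample of size $n+1$ yields an i.i.d.\ sample of size $n$, we get $\lbar'(p,q) = \lbar(p,q) = \ell(p,q)$, so $L'$ implements $\ell$, and symmetrically for $m\mapsto m+1$. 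Either way the argument is short, so in the write-up I would simply present the set-algebra version and remark on the subsampling alternative in a sentence.
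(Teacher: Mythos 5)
Your proof is correct and is essentially the argument the paper intends: the corollary is stated as an immediate consequence of the characterization $BB_{n,m}=\T\cap\F_n\cap\F_m$ in Theorem~\ref{thm:bb} together with the nesting $\F_n\subseteq\F_{n+1}$, and the paper gives no further proof. Your observation that the first inclusion is in fact an equality, and the subsampling construction as a characterization-free alternative, are both valid bonuses.
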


\begin{corollary}\label{cor:BB1m}
If $\T$ in Theorem \ref{thm:bb} is the set of all strictly proper divergences, then $BB_{1,m} = \emptyset$.
\end{corollary}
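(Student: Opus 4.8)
The plan is to read off from Theorem~\ref{thm:bb} what an element of $BB_{1,m}$ must look like and then rule it out with a one-line convexity argument. Applying Theorem~\ref{thm:bb} with $\T$ taken to be the set of strictly proper divergences, any $\ell\in BB_{1,m}$ lies in $\T\cap\F_1\cap\F_m$; in particular $\ell$ is strictly proper and, for every fixed $q$, the induced map $p\mapsto\ell(p,q)$ is a polynomial of degree at most $1$ in $p$, hence affine. So there are coefficients $c_0(q)$ and $(c_x(q))_{x\in\X}$ with
\[
\ell(p,q) \;=\; c_0(q) + \sum_{x\in\X} c_x(q)\,p_x \qquad \text{for all } p\in\Delta_{\X}.
\]
Throughout I assume $|\X|\ge 2$, which is implicit in the paper; for $|\X|=1$ the simplex is a single point and every divergence is vacuously strictly proper.

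Next I would fix a target $q$ that is not a point mass (possible since $|\X|\ge2$, e.g.\ the uniform distribution) and test $\ell(\cdot,q)$ against the vertex $\delta_{x^\star}$ of $\Delta_{\X}$, where $x^\star\in\arg\min_{x\in\X}c_x(q)$. Since $q$ is not a point mass, $\delta_{x^\star}\neq q$, so strict properness of $\ell$ forces $\ell(q,q)<\ell(\delta_{x^\star},q)$, i.e.
\[
c_0(q) + \sum_{x\in\X} c_x(q)\,q_x \;<\; c_0(q) + c_{x^\star}(q).
\]
But the left-hand side is a convex combination of the numbers $\{c_x(q)\}_{x\in\X}$, hence at least their minimum $c_{x^\star}(q)$, contradicting this strict inequality. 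Therefore $\T\cap\F_1=\emptyset$, and since the argument never used $m$, we conclude $BB_{1,m}=\emptyset$ for every $m$ (finite or $\infty$).

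I do not expect a genuine obstacle here: all the substance is already packaged in Theorem~\ref{thm:bb}, and what remains is just the observation that an affine function on $\Delta_{\X}$ cannot attain a strict minimum at a non-vertex point. If one prefers a geometric phrasing of the last step in place of the convex-combination inequality, note that the set of minimizers of an affine function over a polytope is a face of that polytope, and the only singleton faces of the simplex $\Delta_{\X}$ are its vertices $\{\delta_x\}_{x\in\X}$; a non-vertex $q$ can never be the unique minimizer. The only care needed is the boundary bookkeeping: ensuring $|\X|\ge2$ so that a non-point-mass (equivalently, non-vertex) target $q$ exists and so that $\delta_{x^\star}\neq q$.
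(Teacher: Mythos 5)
Your proof is correct and follows essentially the same route as the paper: both invoke the characterization (via Lemma~\ref{lem:multichar} / Theorem~\ref{thm:bb}) to conclude that any $\ell\in BB_{1,m}$ must be affine in $p$, and then observe that an affine function on $\Delta_{\X}$ cannot be strictly minimized at a non-vertex target $q$. Your finishing step --- testing against the vertex $\delta_{x^\star}$ and using the convex-combination inequality --- is in fact a cleaner and more rigorous version of the paper's somewhat informal derivative computation, and your remark about the degenerate case $|\X|=1$ is a reasonable piece of bookkeeping that the paper leaves implicit.
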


\section{Poisson Sampling}  \label{sec:BBGeneral}
So far our results show that proper divergences must all be polynomial in the distributions in order to be implementable.
As such, cross entropy cannot be $(n,m)$-implemented for any finite $n,m \in \N$. We now show that cross entropy can be implemented if we generalize to other sampling schemes.
In Appendix \ref{app:GBB}, we formally define and fully characterize implementable proper divergences under arbitrary sampling schemes.
Here, we focus on the example of Poisson sampling specifically for the cross-entropy divergence.
Other sampling schemes admit a multitude of other distinct classes of U-estimable functions.

We will determine the implementable proper divergences under Poisson sampling schemes.
Poisson sampling gives us much more powerful estimators than in the scheme where we draw a deterministic sample size.
The Poisson distribution is a discrete probability distribution over $\N$ with parameter $\theta>0$ and probability mass function
$f(j; \theta) = \Pr[T=j] = \frac{\theta^j e^{-\theta}}{j!}$.

The sampling scheme is as follows. Let $\alpha,\beta > 0$.
First randomly draw the sample sizes $N \sim Poi(\alpha)$ and $M \sim Poi(\beta)$. Then draw $N$ observations from $p$ and $M$ observations from $q$.
Poisson sampling gives us two powerful properties. First, the counts of each outcome, $\hp_x$ (resp. $\hq_x$), are independent and distributed according to $Poi(\alpha p_x)$ (resp. $Poi(\beta q_x)$). 
Second, we are able to unbiasedly estimate $\theta^k$ for any $k \in \N$ and thus can unbiasedly estimate any power series involving $\theta$.
This estimation is achieved by the Poisson estimator:
\[\del_k^{Poi}(t) = t(t-1)\cdots (t-k+1).\] 
By Lemma \ref{lem:poiest}, if $T \sim Poi(\theta)$ then $\E \del_k^{Poi}(T) = \theta^k$ for any $k \in \N$. We will use this estimator extensively in this section. The first result immediately follows from this estimator. The second follows from the characterization of U-estimable functions in Lemma \ref{lem:poiest}.
\begin{corollary}
For any $n,m\in \N$ and $\alpha, \beta > 0$, $BB_{n,m} \subset BB_{Poi(\alpha), Poi(\beta)}$, the set of all implementable functions with Poisson sampling from $p$ and $q$.
\end{corollary}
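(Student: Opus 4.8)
The plan is to lean directly on the characterization in Theorem~\ref{thm:bb}: every $\ell \in BB_{n,m}$ is a proper divergence that is simultaneously a polynomial of degree $\le n$ in its first argument and degree $\le m$ in its second. Since the Poisson estimator $\del_k^{Poi}$ unbiasedly estimates $\theta^k$ for \emph{every} $k \in \N$ (Lemma~\ref{lem:poiest}), not merely for $k$ below the sample size, we can substitute it for the fixed-sample-size monomial estimators $t_{n,\mathbf{j}}$ used in the proofs of Theorems~\ref{thm:rbb} and~\ref{thm:bb} and reassemble the pieces into a loss implementing $\ell$ under Poisson sampling. As $\ell$ ranges over all of $BB_{n,m}$, this yields the containment $BB_{n,m} \subseteq BB_{Poi(\alpha),Poi(\beta)}$.

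First I would record a purely algebraic fact: a function on $\Delta_\X \times \Delta_\X$ that is, for each fixed second argument, a polynomial of degree $\le n$ in the first, and symmetrically a polynomial of degree $\le m$ in the second, is a \emph{joint} polynomial. Fixing finitely many interpolation points in the first argument and inverting the resulting Vandermonde system expresses the coefficients, viewed as functions of the second argument, as polynomials of degree $\le m$. Hence for $\ell \in BB_{n,m}$ we may write, over a finite index set, $\ell(p,q) = \sum_{(\mathbf{j},\mathbf{j}')} a_{\mathbf{j},\mathbf{j}'} \prod_{x \in \X} p_x^{j_x} \prod_{x \in \X} q_x^{j'_x}$ with $\|\mathbf{j}\|_1 \le n$ and $\|\mathbf{j}'\|_1 \le m$. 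This joint form is exactly what allows a single loss to depend only on the two histograms rather than on $q$ in closed form, so this step, though elementary, is the one most easily overlooked.

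The construction is then immediate. Under Poisson sampling the counts $\hp_x$ are independent with $\hp_x \sim Poi(\alpha p_x)$, and likewise $\hq_x \sim Poi(\beta q_x)$ independently of $\hp$; so by Lemma~\ref{lem:poiest} and independence across coordinates, $\E \prod_{x} \del_{j_x}^{Poi}(\hp_x) = \alpha^{\|\mathbf{j}\|_1} \prod_x p_x^{j_x}$, and similarly for $q$. Defining
\[ L(\hp,\hq) = \sum_{(\mathbf{j},\mathbf{j}')} \frac{a_{\mathbf{j},\mathbf{j}'}}{\alpha^{\|\mathbf{j}\|_1}\beta^{\|\mathbf{j}'\|_1}} \prod_{x \in \X} \del_{j_x}^{Poi}(\hp_x) \prod_{x \in \X} \del_{j'_x}^{Poi}(\hq_x), \]
linearity of expectation together with independence of $\hp$ and $\hq$ gives $\lbar(p,q) = \ell(p,q)$; integrability is trivial since each summand is a finite product of bounded-degree polynomials in Poisson random variables, and the hypothesis $\alpha,\beta>0$ makes the denominators legitimate. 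Because $\ell$ is a proper divergence, $L$ is BB proper under this scheme, i.e.\ $\ell \in BB_{Poi(\alpha),Poi(\beta)}$.

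Finally, for the strictness of the inclusion it is enough to exhibit one Poisson-implementable proper divergence that is not a polynomial: the cross-entropy $-\sum_x q_x \ln p_x$ is a proper divergence by Gibbs' inequality but is not polynomial in $p$, hence lies outside $BB_{n,m}$ for every finite $n,m$ by Theorem~\ref{thm:bb}, and it will be shown implementable under Poisson sampling in the remainder of this section. Thus the only genuine obstacle is the separate-to-joint polynomial passage of the second step; everything else is a direct substitution of $\del_k^{Poi}$ into the now-familiar unbiased-estimator construction, and the unboundedness of the degree handled by $\del_k^{Poi}$ is precisely what makes the Poisson scheme strictly richer.
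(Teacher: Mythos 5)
Your proof is correct and matches the paper's intended argument, which is stated only as ``immediately follows from this estimator'': since $\del_k^{Poi}$ unbiasedly estimates $\theta^k$ for every $k\in\N$, any bounded-degree joint polynomial in $(p,q)$ is estimable under Poisson sampling, and cross-entropy witnesses strictness. Your explicit treatment of the separately-polynomial-to-jointly-polynomial step and of the $\alpha^{\|\mathbf{j}\|_1}\beta^{\|\mathbf{j}'\|_1}$ normalization is more careful than the paper's one-line justification, but it is the same route.
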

\begin{corollary} 
\label{cor:poiConstruction}
A proper divergence is $(Poi(\alpha),Poi(\beta))$-implementable for any $\alpha,\beta > 0$ if and only if it has an equivalent power series expression in the first and second arguments with non-negative integer powers and the power series satisfies 1) every coefficient of the first and second arguments is finite and 2) if the series diverges for any argument, the proper divergence also diverges in the same direction (goes to $+ \infty$ or $- \infty$). 
\end{corollary}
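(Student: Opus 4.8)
The plan is to prove the equivalence by the same two-sided strategy used for Theorems~\ref{thm:rbb} and~\ref{thm:bb}: a direct construction for the sufficiency of the power-series condition, and the U-estimability characterization for its necessity, but now with the finite-sample estimators replaced by the Poisson estimator $\del_k^{Poi}$, which (Lemma~\ref{lem:poiest}) unbiasedly estimates $\theta^k$ for \emph{every} $k\in\N$ rather than only up to a fixed budget. Throughout I will lean on two structural facts about Poisson sampling: the coordinate counts $\hp_x\sim Poi(\alpha p_x)$ and $\hq_x\sim Poi(\beta q_x)$ are all mutually independent, so a loss factors into one-dimensional estimation problems; and $\del_k^{Poi}(t)=t(t-1)\cdots(t-k+1)$ is non-negative on $\N$ and vanishes whenever $t<k$.

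For sufficiency, suppose $\ell$ is a proper divergence with an equivalent expansion $\ell(p,q)=\sum_{j,k} c_{j,k}\prod_{x\in\X}p_x^{a_{j,x}}q_x^{b_{k,x}}$ with non-negative integer exponents satisfying conditions~(1)--(2). I would set
\[
  L(\hp,\hq)=\sum_{j,k} c_{j,k}\prod_{x\in\X}\frac{\del_{a_{j,x}}^{Poi}(\hp_x)}{\alpha^{a_{j,x}}}\cdot\frac{\del_{b_{k,x}}^{Poi}(\hq_x)}{\beta^{b_{k,x}}}.
\]
The first step is to observe that this is a finite sum for every fixed pair of histograms, hence a well-defined real number: a monomial term is nonzero only if $a_{j,x}\le\hp_x$ for all $x$, which forces $\sum_x a_{j,x}\le\|\hp\|_1$, and since $\X$ is finite only finitely many monomials have total degree below a fixed bound (here condition~(1) makes each $c_{j,k}$ a real number). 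The second step is the expectation: by independence of the counts and Lemma~\ref{lem:poiest}, each monomial estimator is non-negative with expectation exactly $\prod_x p_x^{a_{j,x}}q_x^{b_{k,x}}$; applying Tonelli to $\sum_{j,k}|c_{j,k}|\prod_x p_x^{a_{j,x}}q_x^{b_{k,x}}$ legitimizes swapping $\E$ with the series wherever the expansion converges absolutely, giving $\lbar=\ell$, while wherever the expansion diverges, condition~(2) forces $\ell$ to diverge the same way, so $\lbar=\ell=\pm\infty$ consistently. Since $\ell$ is assumed proper, $\lbar$ is a proper divergence and $L$ implements $\ell$.

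For necessity, suppose a loss $L$ implements the proper divergence $\ell$ under $(Poi(\alpha),Poi(\beta))$ sampling. Writing out the product-of-Poissons mass functions for $\Hp,\Hq$ and using $\sum_x p_x=\sum_x q_x=1$,
\[
  \ell(p,q)=e^{-\alpha-\beta}\sum_{\hp,\hq}\frac{L(\hp,\hq)\,\alpha^{\|\hp\|_1}\beta^{\|\hq\|_1}}{\prod_{x\in\X}\hp_x!\,\hq_x!}\;\prod_{x\in\X}p_x^{\hp_x}q_x^{\hq_x},
\]
which already exhibits $\ell$ as a power series in $p$ and $q$ with non-negative integer powers whose coefficients are finite real numbers (each $L(\hp,\hq)\in\R$); the series converges precisely where $\E|L|<\infty$, and where the expectation is only defined as $\pm\infty$ the series diverges the same way, which is exactly condition~(2). (Equivalently, one invokes the Poisson U-estimability characterization of Lemma~\ref{lem:poiest} coordinatewise, using independence of $\Hp$ and $\Hq$.) This gives the claimed form, completing the equivalence.

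I expect the main obstacle to be the analytic bookkeeping around conditions~(1)--(2) in the sufficiency direction: making the interchange of $\E$ with the infinite series fully rigorous, in particular ruling out that $\sum_{j,k}c_{j,k}\prod_x p_x^{a_{j,x}}q_x^{b_{k,x}}$ converges only conditionally --- which I would handle by noting that the absolute series is a sum of expectations of the non-negative estimators $\prod_x\del^{Poi}_{a_{j,x}}(\hp_x)\del^{Poi}_{b_{k,x}}(\hq_x)$, so Tonelli applies regardless of the signs of the $c_{j,k}$ --- and pinning down precisely what ``diverges in the same direction'' should mean for a multivariate power series so the stated equivalence is clean. A secondary subtlety, on the necessity side, is that the $p_x$ range over the simplex rather than an open box, so one must check that the displayed identity, valid on the (relatively open) simplex, determines a genuine global power-series representation rather than merely local analyticity; this is the point that the general development in Appendix~\ref{app:GBB} is set up to handle.
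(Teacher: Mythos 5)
Your proposal is correct and follows essentially the same route as the paper: the paper simply identifies $\F_{Poi(\alpha)}$ and $\F_{Poi(\beta)}$ via the Poisson U-estimability characterization (Lemma~\ref{lem:poiest}) and invokes the general intersection theorem (Theorem~\ref{thm:genconstr}), while you unpack exactly that argument --- the explicit monomial-by-monomial construction with $\del_k^{Poi}$ for sufficiency and the power-series expansion of $\E L$ for necessity. The analytic care you add around interchanging expectation with the series and the meaning of condition~(2) goes beyond what the paper makes explicit, and the residual conditional-convergence subtlety you flag is present in the paper's treatment as well.
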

The proof of Corollary \ref{cor:poiConstruction} appears in Appendix \ref{app:proofs}. Crucially for implementing the cross entropy, many functions in $C^{\infty}$ have an equivalent (Taylor) series that satisfy the conditions of corollary \ref{cor:poiConstruction}. We will use the Taylor series for $\ln(x)$ in the next section.

\subsection{Cross Entropy} 

\fillerConditional{As a consequence of corollary \ref{cor:poiConstruction}, we can construct a generic-black-box proper loss that implements the cross entropy.
By similar methods we can also implement the Shannon entropy and the KL divergence.
Note that with deterministic sampling, we cannot construct such a loss.}
\begin{lemma} \label{lem:crossent}
Let $\hp_{-x} = \sum_{y \neq x} \hp_y$ be number of occurences of all the outcomes except $x$. Then for any $\alpha, \beta > 0$ the loss,
\[L(\hp, \hq) = - \sumX \frac{1}{\beta}\del_1^{Poi}(\hq_x) \sum_{k=1}^\infty  \frac{-1}{k} \frac{1}{\alpha^k} \del_k^{Poi}(\hp_{- x}),\]
$(Poi(\alpha),Poi(\beta))$-implements the cross entropy, $\ell(p,q) = - \sumX q_x \ln(p_x)$. 
\end{lemma}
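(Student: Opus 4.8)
The plan is to compute $\lbar(p,q) = \E L(\hp,\hq)$ directly, invoking the Poisson estimator identities of Lemma~\ref{lem:poiest} together with the Poissonization structure of the sampling scheme; the argument is essentially the concrete instance of Corollary~\ref{cor:poiConstruction} applied to the Taylor series of $\ln$. First I would record the structural facts: under $(Poi(\alpha),Poi(\beta))$ sampling the counts $\hp_x$ are mutually independent with $\hp_x \sim Poi(\alpha p_x)$, likewise $\hq_x \sim Poi(\beta q_x)$ mutually independent, and $\hp$ is independent of $\hq$. Hence $\hp_{-x} = \sum_{y\neq x}\hp_y \sim Poi\big(\alpha(1-p_x)\big)$ (a sum of independent Poissons is Poisson), and $\hp_{-x}$ is independent of $\hq_x$ since the two histograms come from independent samples. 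I would also note that the cross entropy $\ell(p,q) = -\sumX q_x\ln p_x$ is a proper divergence (Gibbs' inequality), so that ``implements'' is meaningful in this setting.

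Next I would expand $\E L$ term by term over the finite sum on $\X$ using linearity of expectation, and factor each summand via the independence of $\hp_{-x}$ and $\hq_x$ as $\tfrac1\beta\,\E[\del_1^{Poi}(\hq_x)]\cdot\E\big[\sum_{k\ge1}\tfrac{-1}{k\alpha^k}\del_k^{Poi}(\hp_{-x})\big]$. The first factor equals $\tfrac1\beta\cdot\beta q_x = q_x$ by Lemma~\ref{lem:poiest}. For the second, the key step is to interchange the expectation over $\hp_{-x}$ with the infinite sum over $k$: for any integer realization $\hp_{-x}=t$ the falling factorial $\del_k^{Poi}(t)$ is nonnegative and vanishes once $k>t$, so every term $\tfrac{-1}{k\alpha^k}\del_k^{Poi}(t)$ has the same sign, and Tonelli's theorem (monotone convergence applied to the negated series) justifies pushing the expectation inside. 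This gives $\sum_{k\ge1}\tfrac{-1}{k\alpha^k}\big(\alpha(1-p_x)\big)^k = -\sum_{k\ge1}\tfrac{(1-p_x)^k}{k}$, which is the series $\ln(1-z)$ at $z=1-p_x\in[0,1]$, hence equals $\ln p_x$; when $p_x=0$ the series diverges to $-\infty$, matching $\ln 0$ in accordance with the divergence clause of Corollary~\ref{cor:poiConstruction}. Combining the two factors yields $\lbar(p,q) = -\sumX q_x\ln p_x = \ell(p,q)$. I would dispatch the boundary coordinates with the usual conventions: if $q_x=0$ then $\hq_x\equiv 0$ almost surely and that summand of $L$ vanishes identically ($0\cdot\ln 0$ is read as $0$); if $q_x>0$ and $p_x=0$ the summand has expectation $-\infty$ and correspondingly $\ell(p,q)=+\infty$.

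I expect the only genuine obstacle to be the rigorous justification of the expectation/summation interchange in the second factor, together with careful bookkeeping of the $p_x\in\{0,1\}$ cases; the rest is a direct substitution of the Poisson estimator identities and the independence structure. An alternative presentation would simply invoke Corollary~\ref{cor:poiConstruction} with the power series $-\sum_{k\ge1}z^k/k$ in $z=1-p_x$ for the first argument and the degree-one polynomial in the second, but writing the estimator out makes the construction explicit and self-contained.
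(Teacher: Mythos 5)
Your proposal is correct and follows essentially the same route as the paper's proof: Poissonize so that $\hq_x \sim Poi(\beta q_x)$ and $\hp_{-x} \sim Poi(\alpha(1-p_x))$ are independent, factor the expectation, apply the Poisson estimator identities termwise, and recognize the Mercator series for $\ln p_x$. Your explicit Tonelli justification for swapping the expectation with the infinite sum (all terms $\tfrac{-1}{k\alpha^k}\del_k^{Poi}(t)$ are nonpositive) is a welcome addition that the paper leaves implicit.
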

We observe that this loss is always finite for a finite sample because $\del_k^{Poi}(t) = 0$ when $t < k$. 
In fact, the loss is efficient to evaluate, i.e. polynomial time in terms of the number of samples drawn, as that number governs the number of nonzero terms.

In correspondence with the cross entropy, this loss can equal infinity in expectation for certain $p,q$, although it is finite for every $\hp$,$\hq$.
We note that the cross entropy can also be $(Poi(\alpha),m)$-implemented, with any $m \geq 1$, with the loss $L(\hp,\hq) = - \sumX \qhat_x \sum_{k=1}^\infty  \frac{-1}{k} \frac{1}{\alpha^k} \del_k^{Poi}(\hp_{- x})$, where $\qhat = \frac{1}{m}\hq$.

\begin{proof}
We will use the Taylor expansion for $\ln(t)$.
For $t \in [0,1], \ln(t) = - \sum_{k=1}^\infty \frac{1}{k}(1-t)^k.$
Note that the series diverges to $-\infty$ at $t=0$ but also $\lim_{t \rightarrow 0} \ln(t) = -\infty$.
Next, we will use that $\Hp$ and $\Hq$ are independent; that $\Hq_x$ is distributed $Poi(\beta q_x)$; and that $\Hp_{-x}$ is distributed $Poi(\alpha(1-p_x))$.
\begin{align*}
    \Elonglong L(\Hp,\Hq) &= \Elonglong \sumX \frac{1}{\beta}\del^{Poi}_1(\Hq_x) \sum_{k=1}^\infty \frac{1}{k} \frac{1}{\alpha^k}\del^{Poi}_k(\Hp_{- x})\\
    &= \sumX q_x \sum_{k=1}^\infty \frac{1}{k} (1 - p_x)^k \\
    &= -\sumX q_x \ln(p_x),
\end{align*}
including the case where both sides equal $-\infty$ (i.e. there exists $x$ with $q_x > 0$ and $p_x = 0$).
\end{proof}
We note that the KL-divergence, $\ell(p,q) = \sum_{\X} q_x \ln\tfrac{q_x}{p_x}$, can be implemented as well.
In fact, it equals the cross-entropy plus the Shannon entropy of $q$.
Shannon entropy can be estimated unbiasedly with Poisson sampling because $\Hq_x$ and $\Hq_{-x}$ are distributed as \emph{independent} Poissons, so
  \[ \E \Hq_x \sum_{k=1}^{\infty} \frac{1}{k} \frac{1}{\beta^k} \del^{Poi}_k(\Hq_{-x})
     = q_x \sum_{k=1}^{\infty} \frac{1}{k}(1-q_x)^k
     = q_x \ln(q_x) . \]

\section{Discussion} \label{sec:discussion}
\paragraph{Variance reduction.}
If one had larger samples than was necessary to minimally implement a proper divergence, one could lower the variance of a loss by computing an empirical average of a minimally-implementing loss. One could compare this empirical average to using the entire sample in a single implementing loss.
In general, a direct extension would be analyze the variance and convergence rate of these BB losses. In the Poisson case, the fully general estimator estimates on the basis of $n$ iid observations of a distribution.
Hence one could draw $k$ iid samples of size $Poi(N/k)$ and develop an estimator to use these separate samples.

\paragraph{Losses for continuous domains.}
\label{sec:contDiscussion}
We have focused on the discrete case in this work, leaving the continuous case to further investigation.
However, we illustrate an initial result in the continuous setting. Let $F_p(\cdot)$ be the CDF of distribution $p$ and $F_{S}(x) = \frac{|\{i: X_i \leq x\}|}{n}$ be the empirical CDF based on sample $S = (X_1, \dots, X_n)$ where each $X_i$ is drawn i.i.d. from $p$. 
\begin{theorem} \label{thm:cont}
Let $\X = \R$ and $\alpha_i \in \R$ for all $i$. Let a proper divergence be of the form $\ell(p,q) = \intR g(\{F_p(x+\alpha_i)\}_{i=1}^m, \cdot) dx$.
If $g(\cdot, \cdot)$ is a polynomial in the first argument with powers $\jk^{(q)} \in \Z^{|\X|}_+$ such that $\|\jk^{(q)}\|_1 = n$, the number of samples, then $g$ is $n$-minimally-implementable.
\end{theorem}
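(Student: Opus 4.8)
The plan is to mirror the proof of Theorem~\ref{thm:rbb}: replace the U-estimators of monomials $\prod_x p_x^{j_x}$ by U-statistics estimating products of CDF values $\prod_{i=1}^m F_p(x+\alpha_i)^{j_{k,i}}$ pointwise in $x$, and then integrate over $x$. For the implementability direction, fix $q$ and write $g$ as a polynomial in its first argument in the sense of Definition~\ref{def:polynomial}, $g(\{F_p(x+\alpha_i)\}_i,\cdot)=\sum_{k\in K^{(q)}}a_k^{(q)}\prod_{i}F_p(x+\alpha_i)^{j_{k,i}^{(q)}}$, where each exponent vector $\mathbf j_k^{(q)}\in\N^m$ has $\|\mathbf j_k^{(q)}\|_1=n$ and the coefficients $a_k^{(q)}$ may also depend on $x$. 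Since $F_p(x+\alpha_i)=\E_{X\sim p}\ind[X\le x+\alpha_i]$, a degree-$n$ monomial is the expectation of a product of $n$ indicators evaluated on $n$ \emph{independent} draws: given a sample $S=(X_1,\dots,X_{n'})$ with $n'\ge n$ and exponents $\mathbf j$, partition an $\|\mathbf j\|_1$-subset of $S$ into blocks $B_1,\dots,B_m$ with $|B_i|=j_i$, form $\prod_i\prod_{l\in B_i}\ind[X_l\le x+\alpha_i]$, and average over all such subsets and block assignments to get a statistic $\hat t_{\mathbf j}(S;x)$ depending only on the empirical CDF $F_S$; independence of the $X_l$ gives $\E_S\hat t_{\mathbf j}(S;x)=\prod_i F_p(x+\alpha_i)^{j_i}$. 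Define $L(F_S,\cdot)=\intR\sum_k a_k^{(q)}\hat t_{\mathbf j_k^{(q)}}(S;x)\,dx$. Each $\hat t_{\mathbf j}(S;x)$ is a bounded nondecreasing step function of $x$ equal to $0$ for $x$ small and to $1$ for $x$ large, so the integrand is bounded and, for $x$ large, equals $\sum_k a_k^{(q)}=g(\mathbf 1,\cdot)$; the latter must vanish because $\ell$ is real-valued (take $p$ a point mass and let $x\to\infty$), so the integrand has compact support in $x$ and $L$ is finite. A Fubini argument --- exploiting the cancellation in the \emph{signed} sum, since a term-by-term $L^1$ bound diverges --- then yields $\bar L(p,q)=\intR g(\{F_p(x+\alpha_i)\}_i,\cdot)\,dx=\ell(p,q)$; as $\ell$ is a proper divergence by hypothesis, $L$ implements it, for every $n'\ge n$.

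For minimality, suppose a loss $L$ on $n'<n$ samples implemented $\ell$. Restrict $p$ to a one-parameter mixture $p_\theta=(1-\theta)p_0+\theta p_1$ supported on a fixed finite set; then $F_{p_\theta}$ is affine in $\theta$, so by Lemma~\ref{lem:multichar} (the expectation of any statistic of an $n'$-sample multinomial is a polynomial of degree $\le n'$ in the cell probabilities) the map $\theta\mapsto\E_{S\sim p_\theta^{n'}}L(S,\cdot)$ is a polynomial of degree $\le n'$ in $\theta$, whereas $\ell(p_\theta,q)=\intR g(\{F_{p_\theta}(x+\alpha_i)\}_i,\cdot)\,dx$ is a polynomial of degree $\le n$ in $\theta$ whose $\theta^n$-coefficient is $\intR g_n(\{F_{p_1}(x+\alpha_i)\}_i,\cdot)\,dx$, with $g_n$ the degree-$n$ homogeneous part of $g$. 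Choosing $p_1$ and $q$ so that this integral is nonzero forces $n'\ge n$; together with the construction this gives $n$-minimal-implementability.

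The step I expect to be the genuine obstacle is this last one. Because the arguments $F_p(x+\alpha_i)$ are translates of one another and Lebesgue measure is translation-invariant, integrating over $x$ can drop the effective degree: for $m=2$ the ``degree-$1$'' integrand $F_p(x+\alpha_1)-F_p(x+\alpha_2)$ integrates to the constant $\alpha_1-\alpha_2$, which is $0$-implementable and so violates literal minimality. Hence the minimality half needs a nondegeneracy condition guaranteeing $\intR g_n(\{F_{p_1}(x+\alpha_i)\}_i,\cdot)\,dx\not\equiv0$ as $p_1,q$ vary; this is automatic when $m=1$ (take $p_1$ uniform on an interval, so the top coefficient is $a_n/(n+1)\neq0$) and plausibly whenever $\ell$ is strictly proper. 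The positive half also carries analytic fine print --- convergence of the integral defining $L$ and the Fubini interchange --- which I would handle by restricting to model and target distributions of finite first moment (or compact support), precisely the regime in which $\ell$ itself is finite.
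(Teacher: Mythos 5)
Your construction for the positive direction is essentially the paper's: the paper fixes $x$, orders the $\alpha_i$, observes that the counts of samples falling in the intervals $(x+\alpha_{i-1},x+\alpha_i]$ form a multinomial random vector whose partial sums give $F_S(x+\alpha_i)$, and then invokes Lemma~\ref{lem:multichar} to unbiasedly estimate the degree-$n$ monomials $\prod_i F_p(x+\alpha_i)^{j_{k,i}}$, finishing by linearity of expectation. Your symmetrized products of indicators over blocks of an $n$-subset of the sample are exactly the multinomial MVUE written out as a U-statistic, so the two constructions coincide. Where you go beyond the paper is in the two places you flag as ``fine print,'' and both are real. First, the paper never addresses integrability of the estimator over $x$ or the interchange of $\intR$ with $\E$; your observation that the integrand tends to $\sum_k a_k^{(q)} = g(\mathbf{1},\cdot)$ as $x\to\infty$, and that this must vanish for $\ell$ to be finite, is the right way to make the integral and the Fubini step legitimate (under a finite-first-moment assumption). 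Second, and more significantly, the paper's proof establishes only implementability with $n$ samples; it asserts $n$-\emph{minimal}-implementability but proves nothing in the lower-bound direction. Your counterexample is correct: for $g(u_1,u_2)=u_1-u_2$ the divergence $\intR\bigl(F_p(x+\alpha_1)-F_p(x+\alpha_2)\bigr)\,dx=\alpha_1-\alpha_2$ is constant in $p$, hence a (non-strict) proper divergence implementable with zero samples, contradicting literal $1$-minimality. So the theorem as stated needs a nondegeneracy hypothesis of the kind you propose (e.g.\ that the integral of the top-degree homogeneous part is not identically zero over $p$), and your mixture-plus-degree-counting argument is a sensible route to the lower bound once that hypothesis is in place. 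In short: your positive half matches the paper; your negative half fills a gap the paper leaves open and correctly identifies that the minimality claim is false without an additional assumption.
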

The proof appears in Appendix \ref{app:proofs}. As a corollary, we are able to implement the Cram{\'e}r distance which we exhibit in Appendix \ref{app:highdim}. These types of distance can easily be extended to a high dimensional distance by picking a direction at random and defining the empirical CDFs based on the hyperplane defined by that random direction.
We illustrate this via a high dimensional version of the Cram{\'e}r distance in appendix \ref{app:highdim}.

\paragraph{Future work.}
A direction of future work is that of constructing black-box proper for continuous setting, which is the most common use-case for GANs.
Another important study would be to investigate the properness of existing losses used in evaluation. Finally, it would be interesting to investigate the use of BB proper losses in evaluating implicit distributions of black-boxes for desired properties. For example evaluating a dice for uniformity or evaluating prepared quantum states.

\section*{Broader Impacts}
The evaluation of generative models, such as GANs, is a very open question with important societal impacts in domains such as climate forecasting. 
We provide an initial theoretical foundation for this question.
Instead of direct applications, we anticipate this work leading to further theoretical investigation.
It may inform practitioners’ choices of which losses they use for evaluating generative models. 
Of course, such evaluation can be used for ethical or unethical purposes. 
We do not know of particular risk of negative impacts of this work beyond risks of generative models in general.
 
\begin{ack}

We thank Claire Monteleoni and Amit Rege for helpful discussions and references.
This material is based upon work supported by the US National Science Foundation under Grant No.\ IIS-2045347. 
\end{ack}
\bibliography{bib-neurips}

\appendix
\section*{Appendix}
\section{Results from Statistical Estimation Theory} \label{app:esttheory}
We have extensively utilized results from Unbiased Estimation (U-Estimation) theory.
These estimators are fundamental to our construction of proper losses for generative models.
\subsection{Definitions}
Since there are possibly infinitely many U-estimators for many quantities, the literature provides a criteria for the `best' estimator:
\begin{defn} 
Let $Y_1, Y_2, \dots, Y_n$ be i.i.d. from some member of a family of densities, $p_{\theta}, \theta \in \Omega$.
An estimator $\del$ is a Minimum Variance Unbiased Estimator (MVUE) for $g(\theta)$ if for some $n \in \N$, for all $\theta \in \Omega$, 
\begin{enumerate}
    \item $\del$ is an unbiased estimator for $g(\theta)$, $\E \del(Y_1, Y_2, \dots, Y_n) = g(\theta)$,
    \item $\var(\del(Y_1, Y_2, \dots, Y_n)) \leq \var(\tilde{\del}(Y_1, Y_2, \dots, Y_n))$ for any other unbiased estimator $\tilde{\del}$.
\end{enumerate}
\end{defn}
\subsection{MVUE for Variance}
\begin{fact} (Canonical MVUE for Variance) 
Let $(y_i)_{i=1}^n$, $n \geq 2$, be i.i.d. realizations of a random variable $Y$. Then the MVUE for variance is 
$$s^2_n((y_i)_{i=1}^n) := \frac{1}{n-1} \sum_{i=1}^n (y_i - \frac{1}{n}\sum y_i)^2.$$
\end{fact}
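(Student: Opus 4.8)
The plan is to establish the two defining properties of an MVUE in turn: unbiasedness, by a one-line moment computation, and minimality of variance, by the Rao--Blackwell and Lehmann--Scheff\'e theorems applied to the order statistics.

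For unbiasedness, I would write $\bar{y} = \frac1n \sum_i y_i$ and use the identity $\sum_{i=1}^n (y_i - \bar{y})^2 = \sum_{i=1}^n y_i^2 - n\bar{y}^2$. Taking expectations over the i.i.d.\ draws and using $\E[Y^2] = \var(Y) + (\E Y)^2$ together with $\E[\bar{Y}^2] = \tfrac1n\var(Y) + (\E Y)^2$, the mean-squared terms cancel and one is left with $\E \sum_i (Y_i - \bar{Y})^2 = (n-1)\var(Y)$, so $\E[s_n^2] = \var(Y)$. This step is routine.

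For the minimum-variance property, let $T(Y_1,\dots,Y_n)$ be an arbitrary unbiased estimator of $\var(Y)$. Over the nonparametric family of all distributions of $Y$ with finite second moment, the order statistics $(Y_{(1)},\dots,Y_{(n)})$ form a complete sufficient statistic. Sufficiency lets us Rao--Blackwellize: $T' := \E[T \mid Y_{(1)},\dots,Y_{(n)}]$ is unbiased with $\var(T') \le \var(T)$, and concretely $T'$ is just the average of $T$ over the $n!$ permutations of its arguments. Completeness, via Lehmann--Scheff\'e, forces the unbiased estimator that is a function of the order statistics to be unique. Since $s_n^2$ is permutation-invariant, it is such a function and is unbiased, so $T' = s_n^2$ almost surely; hence $\var(s_n^2) = \var(T') \le \var(T)$ for every unbiased $T$, which is exactly condition (2) of the MVUE definition.

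The hard part will be the appeal to completeness of the order statistics: it is this (classical but nontrivial) fact that pins down $s_n^2$ as \emph{the} minimum-variance estimator rather than merely \emph{a} symmetric one, and it must be stated for the family of distributions under consideration --- here those with $\E Y^2 < \infty$, so that $\var(Y)$ and all variances appearing in the argument are well-defined. Rather than reprove completeness, I would cite a standard reference (e.g.\ Lehmann--Casella, \emph{Theory of Point Estimation}). Everything else --- the moment computation and the symmetrization step --- is elementary.
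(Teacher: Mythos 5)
The paper does not actually prove this statement: it is labeled a \emph{Fact} and invoked as the canonical result from classical estimation theory, with no argument supplied. Your proposal is the standard textbook proof (essentially Halmos's 1946 argument): unbiasedness by the moment identity $\E\sum_i (Y_i-\bar Y)^2=(n-1)\var(Y)$, and minimality by Rao--Blackwellizing onto the order statistics and invoking their completeness via Lehmann--Scheff\'e. Both steps are correct, and you rightly identify the completeness of the order statistics as the only nontrivial ingredient and as something to cite rather than reprove. The one point worth making explicit is that ``MVUE'' is always relative to a family (cf.\ the paper's Definition A.1.1), and completeness of the order statistics must be asserted for the specific nonparametric family in play; for the family of all distributions with finite second moment this is standard, and in the only place the paper actually uses the fact (Claim A.2.1, where $Y$ is Bernoulli) the conclusion also follows more directly, since $s_n^2$ there reduces to $\frac{Z(n-Z)}{n(n-1)}$ with $Z=\sum_i Y_i$ the complete sufficient statistic for the Bernoulli family, so Lehmann--Scheff\'e applies without any appeal to order statistics. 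Your argument is the more general one and is a perfectly good way to fill the gap the paper leaves open.
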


\begin{claim} \label{varest}
If $Y_i \distas{iid} Ber(\alpha)$ and $Z = Y_1 + Y_2 + \cdots + Y_m$ then $Z \sim Bin(m,\alpha)$.
Let $\bar{Y} = Z/m = \frac{1}{m}\sum_i Y_i$. Then for $m \geq 2$,
$$s^2_m(\bar{Y}) =  \frac{1}{m-1} [\bar{Y}(1-\bar{Y})^2+ (1-\bar{Y})(\bar{Y})^2]$$
is a MVUE for variance.
Note that $\var(\bar{Y}) = \frac{\alpha(1-\alpha)}{m}$.
\end{claim}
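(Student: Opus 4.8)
## Proof Plan for Claim \ref{varest}

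\textbf{The plan} is to reduce the statement to the canonical MVUE formula for variance from the preceding Fact, applied to the special case of Bernoulli random variables. The key observation is that when $Y_i \in \{0,1\}$, each $Y_i$ equals $Y_i^2$, so the sum of squared deviations collapses into a closed form in terms of $\bar Y$ alone.

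\textbf{Step 1.} The distributional claim $Z = \sum_i Y_i \sim \mathrm{Bin}(m,\alpha)$ is immediate from the definition of the Binomial as a sum of $m$ i.i.d.\ $\mathrm{Ber}(\alpha)$ trials, so I would dispense with it in one line. Similarly $\var(\bar Y) = \var(Z)/m^2 = m\alpha(1-\alpha)/m^2 = \alpha(1-\alpha)/m$ follows from independence.

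\textbf{Step 2.} Apply the canonical Fact with the sample $(Y_i)_{i=1}^m$ to get that $s^2_m((Y_i)_{i=1}^m) = \frac{1}{m-1}\sum_{i=1}^m (Y_i - \bar Y)^2$ is the MVUE for variance. Then I would expand the sum of squared deviations algebraically: $\sum_i (Y_i - \bar Y)^2 = \sum_i Y_i^2 - m\bar Y^2$. Since $Y_i \in \{0,1\}$ implies $Y_i^2 = Y_i$, we get $\sum_i Y_i^2 = \sum_i Y_i = m\bar Y$, hence $\sum_i (Y_i-\bar Y)^2 = m\bar Y - m\bar Y^2 = m\bar Y(1-\bar Y)$. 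Therefore
\[
  s^2_m((Y_i)_{i=1}^m) = \frac{m}{m-1}\bar Y(1-\bar Y) = \frac{1}{m-1}\left[\bar Y(1-\bar Y)^2 + (1-\bar Y)\bar Y^2\right],
\]
where the last equality is the routine identity $\bar Y(1-\bar Y) = \bar Y(1-\bar Y)^2 + (1-\bar Y)\bar Y^2$ (factor out $\bar Y(1-\bar Y)$ from the right side to see it equals $\bar Y(1-\bar Y)[(1-\bar Y) + \bar Y]$). This shows the displayed estimator $s^2_m(\bar Y)$ coincides with the canonical MVUE and hence is itself a MVUE.

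\textbf{Main obstacle.} There is no serious obstacle here; the only subtlety worth a sentence is making sure the estimator written as a function of $\bar Y$ alone is legitimately "the" MVUE — i.e.\ that expressing it through the sufficient statistic $Z = m\bar Y$ is exactly what one expects, since $\bar Y$ (equivalently $Z$) is a complete sufficient statistic for the Binomial family, and by Lehmann–Scheffé any unbiased estimator that is a function of a complete sufficient statistic is \emph{the} unique MVUE. I would note that the canonical $s^2_m$ is already a function of $\bar Y$ in this special case, so no Rao–Blackwellization step is actually needed — the collapse $Y_i^2 = Y_i$ does it for free.
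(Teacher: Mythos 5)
There is a genuine gap in your Step 2: the final displayed equality
\[
\frac{m}{m-1}\,\bar Y(1-\bar Y) \;=\; \frac{1}{m-1}\left[\bar Y(1-\bar Y)^2 + (1-\bar Y)\bar Y^2\right]
\]
is false. As your own factoring shows, $\bar Y(1-\bar Y)^2 + (1-\bar Y)\bar Y^2 = \bar Y(1-\bar Y)$, so the right-hand side equals $\frac{1}{m-1}\bar Y(1-\bar Y)$, which is $\tfrac{1}{m}$ times your left-hand side. The estimator in the claim is therefore \emph{not} the canonical sample variance $s^2_m((Y_i)_{i=1}^m)=\frac{m}{m-1}\bar Y(1-\bar Y)$; it is that quantity divided by $m$. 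The underlying issue is that you identified the wrong target: the canonical $s^2_m$ is unbiased for $\var(Y_1)=\alpha(1-\alpha)$, whereas the claim (see its final sentence, and its use in Claim~\ref{ex:RBBsqLoss} to cancel $\sum_x \var(\phat_x)$) asserts unbiasedness for $\var(\bar Y)=\alpha(1-\alpha)/m$, the variance of the empirical frequency. These two targets differ by exactly the factor of $m$ that your equality drops, so as written your argument proves unbiasedness of a different estimator for a different quantity.

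The fix is small and preserves your structure: after computing $s^2_m((Y_i)_{i=1}^m)=\frac{m}{m-1}\bar Y(1-\bar Y)$ with expectation $\var(Y_1)$, divide by $m$ to obtain $\frac{1}{m-1}\bar Y(1-\bar Y)=\frac{1}{m-1}[\bar Y(1-\bar Y)^2+(1-\bar Y)\bar Y^2]$, whose expectation is $\frac{1}{m}\var(Y_1)=\var(\bar Y)$. This corrected route is essentially the paper's proof, which chains $\var(\bar Y)=\frac{1}{m}\var(Y_1)=\frac{1}{m}\E\, s^2_m((Y_i)_{i=1}^m)$ and then evaluates $\sum_i(Y_i-\bar Y)^2$ on $\{0,1\}$-valued data via the same Bernoulli collapse you used. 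Your Lehmann--Scheff\'e observation (that $Z=m\bar Y$ is complete sufficient for the binomial family, so any unbiased function of it is the unique MVUE) is a genuine improvement over the paper, which only verifies unbiasedness and does not argue minimal variance; but it must be applied to the correctly scaled estimator.
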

\begin{proof}
\begin{align*}
    \E_{Z=m\bar{Y}} s^2_m(\bar{Y}) &= Var(\bar{Y}) \\
    &=  Var(\frac{1}{m}\sum_{i=1}^m Y_i) \\
&= \frac{1}{m^2} Var(\sum_{i=1}^m Y_i) \\
&= \frac{1}{m^2} [\sum_{i=1}^m Var(Y_i) + \sum_{i \neq j} Cov(Y_i, Y_j)] \\
&= \frac{1}{m^2} [m Var(Y_1) + 0] \\
&= \frac{1}{m} Var(Y_1) \\
&= \frac{1}{m} \E s^2_m(Y_1) \\
&= \frac{1}{m} \E \frac{1}{m-1} \sum_{i=1}^m (Y_i - \bar{Y})^2 \\
&= \frac{1}{m} \E \frac{1}{m-1} \sum_{i=1}^m (\ind_{Y_i=1} - \bar{Y})^2 \\
&= \frac{1}{m} \E \frac{1}{m-1} [m\bar{Y}(1-\bar{Y})^2 + m(1-\bar{Y})(0-\bar{Y})^2] \\
&= \E_{\bar{Y}} \frac{1}{m-1} [\bar{Y}(1-\bar{Y})^2+ (1-\bar{Y})(\bar{Y})^2].
\end{align*}
\end{proof}

\subsection{Multinomial Estimator}

\begin{lemma} \citep{kolmogorov1950unbiased} \label{lem:multichar}
Let $Y \sim M(m, p)$ be a multinomial random variable. 
A real-valued function $f(p)$ has an unbiased estimator on the basis of an observation from $Y$ if and only if $f$ is a polynomial of degree at most $m$.
The unique MVUE of such a polynomial is constructed using the following estimators \citep{hoeffding1994range}, 
$$t_{m,\jk}(y) = \prod_{\X} \frac{y_x(y_x-1)\cdots(y_x-j_{k,x}+1)}{m(m-1)\cdots(m-\|\jk\|_1+1)}.$$
Where $y_x$ is the number of observations of element $x \in \X$ in the sample and $\E t_{m,\jk}(Y) = \prod_{\X} p_x^{j_{k,x}}.$
The binomial distribution is a special case of the multinomial distribution.
\end{lemma}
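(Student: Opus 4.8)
The plan is to prove the three assertions of the lemma in turn. First, the ``only if'' direction: an estimator based on one observation from $Y \sim M(m,p)$ is a function $g : \hist_m \to \R$ with
\[
  \E\, g(Y) \;=\; \sum_{y \in \hist_m} g(y)\,\frac{m!}{\prod_{x \in \X} y_x!}\,\prod_{x \in \X} p_x^{y_x},
\]
and since every $y \in \hist_m$ has $\|y\|_1 = m$, the right-hand side is a homogeneous degree-$m$ polynomial in the entries $p_x$; restricted to $\Delta_{\X}$ it is therefore a polynomial of degree at most $m$. Hence if $g$ is unbiased for $f$, then $f$ must be such a polynomial. (The binomial statement is just the case $|\X| = 2$.)

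For the ``if'' direction I would construct the estimator monomial-by-monomial. Homogenize first: on $\Delta_{\X}$ we have $\sum_x p_x = 1$, so any monomial of degree $J < m$ equals its product with $(\sum_x p_x)^{m-J}$, and thus $f$ agrees on $\Delta_{\X}$ with $\sum_k a_k \prod_{x} p_x^{j_{k,x}}$ for some monomials with each $\|\jk\|_1 \le m$; by linearity of expectation it then suffices to unbiasedly estimate a single monomial $\prod_x p_x^{j_x}$ with $\|j\|_1 = J \le m$. The crux is the multinomial factorial-moment identity
\[
  \E\Big[\, \prod_{x \in \X} Y_x(Y_x-1)\cdots(Y_x-j_x+1)\,\Big] \;=\; m(m-1)\cdots(m-J+1)\,\prod_{x \in \X} p_x^{j_x},
\]
which I would prove by an ordered-selection argument: realize $Y$ as the count vector of $m$ i.i.d.\ labeled draws $Z_1,\dots,Z_m \sim p$; the random variable $\prod_x Y_x(Y_x-1)\cdots(Y_x-j_x+1)$ then counts the ways to pick, for each $x$, an ordered $j_x$-tuple of distinct indices from $\{i : Z_i = x\}$ (these $J$ indices being automatically distinct across different $x$), so it equals a sum, over ordered $J$-tuples of distinct indices in $\{1,\dots,m\}$, of the indicator that the corresponding $Z$'s match a prescribed label pattern; taking expectations, each of the $m(m-1)\cdots(m-J+1)$ such tuples contributes $\prod_x p_x^{j_x}$ by independence. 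Dividing through yields $\E\, t_{m,j}(Y) = \prod_x p_x^{j_x}$, and then $L(h,q) = \sum_k a_k\, t_{m,\jk}(h)$ is unbiased for $f$.

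Finally, uniqueness---and hence the MVUE claim---follows from completeness of the multinomial family. If $\E_p\, g(Y) = 0$ for all $p \in \Delta_{\X}$, then the homogeneous degree-$m$ polynomial $P(p) = \sum_y g(y)\frac{m!}{\prod_x y_x!}\prod_x p_x^{y_x}$ vanishes on the open simplex, and by homogeneity---rescaling any point of the open positive orthant onto the simplex---it vanishes on a nonempty open subset of $\R^{|\X|}$, so $P$ is the zero polynomial; matching coefficients forces $g(y)\frac{m!}{\prod_x y_x!} = 0$, i.e.\ $g \equiv 0$ on $\hist_m$. Thus the unbiased estimator, when it exists, is unique (equivalently: $Y$ is its own complete sufficient statistic, so Lehmann--Scheff\'e applies), it therefore coincides with the constructed $\sum_k a_k\, t_{m,\jk}$, and it is vacuously of minimum variance. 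I expect the main obstacle to be exactly this last step: one must pass carefully from ``$P$ vanishes on the simplex'' to ``$P$ vanishes identically as a polynomial,'' which the homogenize-and-rescale trick handles; everything else is bookkeeping plus the single combinatorial identity above.
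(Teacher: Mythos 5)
The paper does not actually prove this lemma: it is imported from the classical literature (\citet{kolmogorov1950unbiased} for the characterization of U-estimable functions of a multinomial, \citet{hoeffding1994range} for the explicit MVUE), so there is no in-paper argument to compare against. Your proof is correct and self-contained, and it fills in exactly what the citation elides. All three pillars are sound: (i) the ``only if'' direction, where $\E\, g(Y)=\sum_{y\in\hist_m} g(y)\tfrac{m!}{\prod_x y_x!}\prod_x p_x^{y_x}$ is a homogeneous degree-$m$ polynomial, hence any unbiasedly estimable $f$ has a representative of degree at most $m$; (ii) the factorial-moment identity $\E\bigl[\prod_x Y_x(Y_x-1)\cdots(Y_x-j_x+1)\bigr]=m(m-1)\cdots(m-J+1)\prod_x p_x^{j_x}$, for which your ordered-selection argument (realize $Y$ as counts of $m$ i.i.d.\ labeled draws and count ordered $J$-tuples of distinct indices matching a prescribed label pattern) is the standard and cleanest route; and (iii) uniqueness via completeness of the multinomial family, where the homogenize-and-rescale step correctly bridges ``vanishes on the simplex'' to ``vanishes identically,'' after which matching coefficients of the distinct monomials $\prod_x p_x^{y_x}$, $y\in\hist_m$, forces $g\equiv 0$; since $Y$ is the entire observation, uniqueness immediately yields the MVUE claim. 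One cosmetic caveat worth stating explicitly: ``degree at most $m$'' for a function on $\Delta_{\X}$ is representative-dependent (multiplying by powers of $\sum_x p_x$ changes the degree of the representative but not the function), and your homogenization remark implicitly adopts the right convention---that $f$ admits \emph{some} representative of degree at most $m$---which is what the only-if direction delivers and the if direction consumes.
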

\subsection{Binomial Estimator}
\begin{lemma} \label{lem:binest}
\citep{lehmann2006theory}.
Let $T \sim Bin(m,\alpha)$. Then $f(\alpha)$ is unbiasedly estimable if and only if $f$ is a polynomial with degree $\leq m$. The MVUE estimator for $f(\alpha) = \alpha^k$, $k \leq m$, is 
$$\del_{m,k}^{Bin}(t) = \frac{t(t-1) \cdots (t-k+1)}{m (m-1) \cdots (m-k+1)}.$$
Hence $\E \del_{m,k}^{Bin}(T) = \alpha^k$.
\end{lemma}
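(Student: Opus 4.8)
The plan is to derive Lemma \ref{lem:binest} as the two-category special case of the multinomial characterization in Lemma \ref{lem:multichar}, supplemented by a direct moment computation to verify the explicit estimator. First I would observe that a single draw $T \sim Bin(m,\alpha)$ is exactly the success-count of a size-$m$ multinomial sample over the two-element domain $\X = \{0,1\}$ with probability vector $(p_0,p_1) = (1-\alpha,\alpha)$: the full count vector is $(m-T,\,T)$, and since $m$ is fixed, knowing $T$ determines the whole vector. Hence a function of a single binomial observation is the same object as a function of the corresponding multinomial observation, and the necessity direction already follows at the level of expectations, since $\E g(T) = \sum_{t=0}^m g(t)\binom{m}{t}\alpha^t(1-\alpha)^{m-t}$ is manifestly a polynomial in $\alpha$ of degree at most $m$.

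The one point that needs care is reconciling the two notions of ``polynomial of degree $\leq m$'': Lemma \ref{lem:multichar} speaks of polynomials in the vector $(p_0,p_1)$, whereas here $f$ is a polynomial in the single scalar $\alpha$. I would resolve this by restricting to the simplex $p_0 + p_1 = 1$. A univariate polynomial $f(\alpha)=\sum_{j=0}^d c_j \alpha^j$ with $d \leq m$ homogenizes on the simplex via $\alpha^j = p_1^{\,j}(p_0+p_1)^{m-j}$ into a multinomial-form polynomial of degree $\leq m$; conversely, substituting $p_0 = 1-\alpha$ into any degree-$\leq m$ multinomial polynomial returns a univariate polynomial of degree $\leq m$ in $\alpha$. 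The two notions therefore coincide, so the ``if and only if'' of Lemma \ref{lem:multichar} transfers verbatim, and so does the uniqueness/MVUE assertion (equivalently, $T$ is a complete sufficient statistic for this full-rank one-parameter exponential family, so Lehmann--Scheff\'e makes the unique unbiased estimator automatically minimum-variance). For the explicit formula I would specialize $t_{m,\jk}$ at the exponent vector $\jk = (0,k)$, i.e. exponent $0$ on the failure coordinate and $k$ on the success coordinate: the failure factor is an empty product equal to $1$, and $\|\jk\|_1 = k$, so $t_{m,(0,k)}$ collapses to exactly $\del_{m,k}^{Bin}(t) = \frac{t(t-1)\cdots(t-k+1)}{m(m-1)\cdots(m-k+1)}$, with $\E\del_{m,k}^{Bin}(T) = \alpha^k$ inherited from the lemma.

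To keep the MVUE computation self-contained (and to double-check the specialization), I would also verify $\E\del_{m,k}^{Bin}(T) = \alpha^k$ directly by a factorial-moment argument. Writing $T = \sum_{i=1}^m Y_i$ with $Y_i \distas{iid} Ber(\alpha)$, the falling factorial counts ordered $k$-tuples of distinct success indices,
\[ T(T-1)\cdots(T-k+1) = \sum_{(i_1,\dots,i_k)} Y_{i_1}\cdots Y_{i_k}, \]
the sum ranging over ordered tuples of distinct indices in $\{1,\dots,m\}$. Taking expectations, each of the $m(m-1)\cdots(m-k+1)$ terms contributes $\alpha^k$ by independence, so $\E[T(T-1)\cdots(T-k+1)] = m(m-1)\cdots(m-k+1)\,\alpha^k$ and the normalization in $\del_{m,k}^{Bin}$ yields $\alpha^k$. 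The main obstacle is not any hard estimate but the degree-matching bookkeeping in the previous paragraph: I must confirm that ``degree $\leq m$ in $\alpha$'' and ``degree $\leq m$ in $(p_0,p_1)$ on the simplex'' agree in both directions, since it is precisely this equivalence that lets the multinomial characterization and its uniqueness claim descend cleanly to the binomial case.
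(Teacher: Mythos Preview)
The paper does not actually prove Lemma \ref{lem:binest}; it is stated as a classical result cited from \citet{lehmann2006theory}, with no argument given. Your proposal therefore cannot be compared against a paper proof, but as a standalone derivation it is correct: the specialization of Lemma \ref{lem:multichar} to the two-category case is sound, the degree-matching bookkeeping on the simplex is fine (indeed slightly more elaborate than necessary, since $\alpha^j = p_1^{\,j}$ is already a degree-$j$ multinomial polynomial without homogenization), the Lehmann--Scheff\'e appeal via completeness of $T$ handles the MVUE claim, and the direct factorial-moment verification is the standard textbook argument.
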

\subsection{Poisson Estimator}
\begin{lemma} \citep{glasser1962minimum} \label{lem:poiest}
A function of the Poisson parameter $\theta$ has an unbiased estimator if and only if the function can be expressed as a series in integer non-negative powers of $\theta$. Let $T \sim Poi(\theta)$. Then for all $k \in \N$ the MVUE estimator of $\theta^k$ is 
$$\del_k^{Poi}(t) = t(t-1)\cdots (t-k+1).$$
Note that if $t < k$ then $\delta_k^{Poi}(t) = 0$. Hence $\E \del_k^{Poi}(T) = \theta^k$.
\end{lemma}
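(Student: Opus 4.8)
The plan is to establish Lemma~\ref{lem:poiest} in three parts: the unbiasedness of the falling-factorial estimator $\del_k^{Poi}$ (together with its vanishing for small arguments), the power-series characterization of the unbiasedly estimable functions of $\theta$, and finally the minimum-variance optimality. Each part draws on a standard piece of single-parameter estimation theory for the Poisson family, so the work is mainly in assembling them.

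First I would verify unbiasedness directly by computing the $k$-th factorial moment of the Poisson distribution. For $T \sim Poi(\theta)$,
\[ \E\, \del_k^{Poi}(T) = \sum_{t=k}^{\infty} t(t-1)\cdots(t-k+1)\, \frac{\theta^t e^{-\theta}}{t!} = \sum_{t=k}^{\infty} \frac{\theta^t e^{-\theta}}{(t-k)!}, \]
and re-indexing with $s = t-k$ collapses this to $\theta^k e^{-\theta}\sum_{s \ge 0}\theta^s/s! = \theta^k$. The vanishing note is immediate: for an integer $t$ with $0 \le t < k$, exactly one of the $k$ consecutive factors $t, t-1, \dots, t-k+1$ equals $0$ (the $(t+1)$-th), so the product is zero. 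This is also what makes every estimator built below a finite sum for each fixed realization and hence well-defined on all of $\N$.

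Next, for the characterization I would use the generating-function identity $\E\,\del(T) = e^{-\theta}\sum_{t \ge 0} \frac{\del(t)}{t!}\theta^t$. For the ``only if'' direction, if $\del$ is unbiased for $g$, then $g(\theta) = e^{-\theta}\sum_{t}\frac{\del(t)}{t!}\theta^t$ exhibits $g$ as a product of two power series in non-negative integer powers of $\theta$, hence itself such a power series. For the ``if'' direction, given $g(\theta) = \sum_{k \ge 0} a_k \theta^k$, I would propose $\del(t) = \sum_{k \ge 0} a_k\, \del_k^{Poi}(t)$, which by the vanishing note reduces to the finite sum $\sum_{k=0}^{t} a_k\, \del_k^{Poi}(t)$ and is therefore well-defined, and then confirm $\E\,\del(T) = \sum_k a_k \theta^k = g(\theta)$ by linearity applied to the first part.

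Finally, for minimum-variance optimality I would invoke completeness of the Poisson family together with the Lehmann--Scheff\'e theorem. The single observation $T$ is trivially sufficient, and it is complete because $\E_\theta\, h(T) = 0$ for all $\theta > 0$ forces the power series $\sum_t h(t)\theta^t/t!$ to vanish identically, so every coefficient and hence $h$ is zero. Any unbiased estimator that is a function of a complete sufficient statistic is the unique MVUE; since $\del_k^{Poi}(T)$ is unbiased for $\theta^k$ and depends only on $T$, it is the MVUE. The main obstacle I anticipate is the ``if'' direction: justifying the interchange of the summation over $k$ with the expectation requires controlling convergence of $\sum_k a_k \theta^k$ and of the associated double series, which is precisely the subtlety later refined by the finiteness-of-coefficients and matching-divergence conditions of Corollary~\ref{cor:poiConstruction}; for the basic statement here it suffices to work on the domain of convergence of $g$, where the relevant Poisson moment series converges absolutely.
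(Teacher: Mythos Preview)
Your proposal is correct and follows essentially the same route as the paper's own (commented-out) sketch: the ``only if'' direction by writing $\E\,\del(T) = e^{-\theta}\sum_t \del(t)\theta^t/t!$ as a power series, and the ``if'' direction by assembling $\del_k^{Poi}$ estimators term-by-term. The paper treats the lemma as a citation to \citet{glasser1962minimum} and does not prove the unbiasedness computation or the MVUE claim in the compiled text; your additions---the explicit factorial-moment calculation and the completeness/Lehmann--Scheff\'e argument for optimality---fill in exactly what the paper leaves to the reference, so there is no divergence in approach, only in level of detail.
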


 The MVUE for any estimable $F(\theta)$ can be constructed by writing the function as a power series and replacing all the $\theta^t$ with its unbiased estimator given in Lemma \ref{lem:poiest} \citep{glasser1962minimum}.
Suppose our random variable for the count of $x$ is $H_x \sim Poi(\alpha p_x)$. We can unbiasedly estimate $p_x^k$:

$$\frac{1}{\alpha^k}\E \delta_k^{Poi}(H_x) = \frac{1}{\alpha^k} (\alpha p_x)^k = p_x^k.$$ 
\section{Omitted Proofs}\label{app:proofs}
\subsection{Omitted Proofs from Section \ref{sec:BB}}
\proofConditional{
\begin{proof}[Proof of Theorem \ref{thm:bb}]
Let $\ell \in BB_{n,m}$.
We first show that $\ell$ is $(n,m)$-implementable.
$\ell$ is a proper divergence by definition.
$\ell$ is a also a polynomial in both arguments with bounded degree, so let us write it in the following form:
\[\ell(p,q) = \sum_{k \in K} a_{k} \prod_{\X} p_x^{i_{k,x}}\prod_{\X}q_x^{j_{k,x}},\]
where $K$ is finite; for all $k\in K$, $a_{k}$ is a nonzero constant; $\ik, \jk \in \N^{\X}$ with the pair unique for each $k$; $\|\ik\|_1 \leq n$ and $\|\jk\|_1 \leq m$.
The construction of the implementing loss is similar to that in the proof of Theorem \ref{thm:rbb}. Again, we have that $\Hp \sim Multinomial(n,p)$ and also $\Hq \sim Multinomial(m,q)$.
Hence we use the estimators from classical results, $t_{\ik}$ and $t_{\jk}$, to estimate each summand in $\ell$:

\[\Elonglong L(\Hp, \Hq) = \Elonglong \sum_{k \in K} a_{k} t_{\ik}(\Hp)t_{\jk}(\Hq) = \ell(p,q).\]
Where the last equality follows by independence of $\Hp$ and $\Hq$, and Lemma \ref{lem:multichar}. Thus $\ell$ is $(n,m)$-implementable, and a proper divergence by definition.

For the converse, if $\ell$ is a proper divergence but $\ell \notin BB_{n,m}$, then by the characterization (Lemma \ref{lem:multichar}) of the U-estimable functions under a multinomial distribution, for all $k$, there does not exist  $t_{\ik}$ or $t_{\jk}$ such that $\Elong t_{\ik}(\Hp) = \prod_{\X}  p_x^{i_{k,x}}$ and similarly for $\prod_{\X} q_x^{j_{k,x}}$. Thus $\ell$ is not $(n,m)$-implementable. Minimal-implementability also follows from this characterization, Lemma \ref{lem:multichar}.
\end{proof}
}

\proofConditional{
\begin{proof}[Proof of Corollary \ref{cor:BB1m}]
By Lemma \ref{lem:multichar}, the losses in $BB_{1,m}$ that are implementable are $\{g: g(p,q) = \sumX f_x(q) p_x\}$. Where the degree of each $f_x(q)$ is $\leq m$.
For a generic $g$ we now find the report that minimizes the expected loss.
\begin{align*}
    \frac{d}{dp} g(p,q) &= \frac{d}{dp} \sumX f_x(q) p_x \\
    &= \sumX f_x(q)
\end{align*}
Thus any report minimizes the expected loss of any function that is $(1,m)$-implementable hence none of these expected losses are strictly proper divergences.
In other words, all $(1,m)$-implementable divergences are constant for a fixed $q$.
\end{proof}}

\subsection{Omitted Proofs from section \ref{sec:BBGeneral}}

\begin{proof}[Proof of Corollary \ref{cor:poiConstruction}]
By characterization in Lemma \ref{lem:poiest} of functions estimable under a Poisson distribution, $\F_{Poi(\alpha)} = \{\ell(\cdot, \cdot): \ell$ is a power series in the first argument with non-negative integer powers$\}$. $\F_{Poi(\beta)}$ is similarly defined in terms of the second argument. The corollary follows by applying Theorem \ref{thm:genconstr}.
\end{proof}

\subsection{Proof of Lemma \ref{lem:crossent}}
\begin{proof}
We will use the Taylor expansion for $\ln(x)$.
For $x \in [0,1], \ln(x) =  \sum_{k=1}^\infty \frac{(-1)^{k+1}}{k}(x-1)^k.$
Note that the series diverges to $-\infty$ at $x=0$ but also $\lim_{x \rightarrow 0} \ln(x) = -\infty$.
Shannon entropy is implemented by the fact that $\Hp_{-x}$ is a Poisson random variable distributed according to $Poi(\alpha p_{-x} )$ that is independent from $\Hp_x$.
\begin{align*}
    \Elonglong L(\Hp,\Hq) &= - \Elonglong \sumX \frac{1}{\beta}\del^{Poi}_1(\Hq_x) \sum_{k=1}^\infty \frac{-1}{k} \frac{1}{\alpha^k}\del^{Poi}_k(\Hp_{- x})\\
    &= -\sumX\frac{1}{\beta} \E_{\Hq \sim q^m} \big[\del^{Poi}_1(\Hq_x)\big] \sum_{k=1}^\infty \frac{-1}{k}  \frac{1}{\alpha^k}\Elong\big[\del^{Poi}_k(\Hp_{- x})\big] \\
    &= -\sumX q_x \sum_{k=1}^\infty \frac{-1}{k} p_{-x}^k \\
    &= -\sumX q_x \sum_{k=1}^\infty \frac{-1}{k} (1-p_x)^k \\
    &= -\sumX q_x \sum_{k=1}^\infty \frac{-1}{k} (-1)^k(p_x-1)^k \\
    &= -\sumX q_x \ln(p_x).
\end{align*}
\end{proof}

\subsection{Proofs from section \ref{sec:discussion}}

\begin{proof}[Proof of \ref{thm:cont}]
We only need to show that we can unbiasedly estimate each term in $g$.
The result then follows by linearity of expectation.
To do this we will show that the vector valued random variable $T = \{F_{S}(x+\alpha_i)\}_{i=1}^k$ is a function of a multinomial random variable.
Hence the unbiased estimators and result follows from Lemma \ref{lem:multichar}.

 Without loss of generality let $\alpha_1 \leq \alpha_2 \cdots \leq \alpha_m$ and $\alpha_0 = -\infty$.
 Now define $Z \sim Multinomial(n, (F_p(x + \alpha_1), F_p(x+\alpha_2)-F_p(x+\alpha_1), \dots, F_p(x+\alpha_m) - F_p(x+ \alpha_{m-1})).$
 $Z$ is a vector valued random variable where the count $Z_i$, $i \in \{1,2, \dots m\}$, corresponds to how many samples fall in the interval $[x+\alpha_i, x+ \alpha_{i-1}]$.
 Hence we can rewrite the random variable $F_{S}(x+ \alpha_i)$ as
 
 \[F_{S}(x+ \alpha_i) = \frac{1}{n} (Z_1 + Z_2 + \cdots Z_i) = \frac{1}{n}\sum_{j=1}^i Z_j.\]
 
Now if $g$ is a polynomial then 

\[g(\{F_p(x+\alpha_i)\}_{i=1}^m, \cdot) =  \sum_{\jk^{(q)}} a_{\jk^{(q)}} \prod_{i=1}^m F_p(x+\alpha_i) ^{j_{k,i}^{(q)}}.\]

Where $a_{\jk^{(q)}}$ subsumes the second argument.
Now we show that the product, $\prod_{i=1}^m F_p(x+\alpha_i) ^{j_{k,i}^{(q)}}$, is unbiasedly estimable.
The result follows by linearity of expectation.
Now by the condition of the theorem, $\|\jk^{(q)}\|_1 \leq n$ so we only have at most $n$ distinct $\alpha_i$ in each product hence we can ignore all the other $\alpha_i$ that have a power of $0$.
Thus now we define a multinomial random variable as before except now only with the $\alpha_i$ that are involved.
Let's reindex $\jk^{(q)}$ and $\mathbf{\alpha}$ so that all the entries where $j_{k,i}^{(q)}=0$ are above index $B$.
Again let $\alpha_1 \leq \alpha_2 \leq \cdots \leq \alpha_B$, then the corresponding random variable is $Z \sim Multinomial(n, (F_p(x+\alpha_1),F_p(x+\alpha_2) - F_p(x+\alpha_1), \dots ))$.
Thus by the multinomial characterization, we can estimate polynomials of the parameters of this distribution (we know $n$). 

\begin{align*}
    \prod_{i=1}^B F_p(x+\alpha_i) ^{j_{k,i}^{(q)}} &= \prod_{i=1}^B \bigg(F_p(x+\alpha_1) + \sum_{\gamma=2}^i [F_p(x+\alpha_{\gamma}) - F_p(x+\alpha_{\gamma-1})]\bigg)^{j_{k,i}^{(q)}} \\
    &= \prod_{i=1}^B \bigg(\E \frac{Z_1}{n} + \sum_{\gamma=2}^i \E\frac{Z_{\gamma}}{n}\bigg)^{j_{k,i}^{(q)}}.
\end{align*}
Since $\|\jk^{(q)}\| \leq n$ for all $k$, this polynomial will have degree at most $n$ in the parameters of the multinomial distribution and so is unbiasedly estimable with the multinomial estimator.
Each parameter is exactly $\frac{1}{n}\E Z_i$.
\end{proof}

\section{Countably Infinite Domains}
\label{app:countablyInfinite}
\begin{lemma}
Let $\X$ be countably infinite. Let $\X_k \subset \X$ be a finite subset for all $k \in \N$. Let a proper divergence be of the form
$$\ell(p,q) = \sum_{k=1}^\infty a_k d_{\X_k}(p_x,q_x),$$
where $\sum_{k=1}^\infty a_k$ converges and $d_{\X_k}$ is a $(r,t)$-implementable divergence on the empirical distribution restricted to $\X_k$, and bounded for all $\X_k, p$, and $q$. Then $\ell$ is $(r,t)$-implementable.
\end{lemma}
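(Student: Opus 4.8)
Since $\ell$ is assumed to be a proper divergence, it suffices to produce a single BB loss $L:\hist_r\times\hist_t\to\R$ with $\E_{\Hp\sim p^r,\ \Hq\sim q^t}L(\Hp,\Hq)=\ell(p,q)$ for all $p,q\in\Delta_{\X}$. The plan is to assemble $L$ as the $a_k$-weighted sum of implementing losses for the pieces $d_{\X_k}$, and then to verify that the series converges pointwise and that expectation passes through it.

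First I would fix, for each $k$, a BB loss $L_k:\hist_r\times\hist_t\to\R$ that $(r,t)$-implements $d_{\X_k}$; one exists by hypothesis. Because $d_{\X_k}$ lives over the finite set $\X_k$, Theorem~\ref{thm:bb} says it is a polynomial of degree $\le r$ (resp.\ $\le t$) in its first (resp.\ second) argument, and I would take $L_k$ to be the canonical implementation from the proof of Theorem~\ref{thm:bb}: a finite linear combination of the multinomial MVUE estimators $t_{r,\ik}(\hp)\,t_{t,\jk}(\hq)$ of Lemma~\ref{lem:multichar} over the monomials of $d_{\X_k}$, which involve only the (finitely many) coordinates in $\X_k$. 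The identity $\E_{\Hp\sim p^r,\Hq\sim q^t}L_k(\Hp,\Hq)=d_{\X_k}(p,q)$ still holds over the countable domain, since $L_k$ reads only the coordinates in $\X_k$ and the marginal of $\Hp$ on $\X_k$ (with everything outside collected into one bucket) is an ordinary finite multinomial to which Lemma~\ref{lem:multichar} applies.

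Next I would set $L(\hp,\hq):=\sum_{k=1}^\infty a_k L_k(\hp,\hq)$ and argue it is a well-defined real number for every $(\hp,\hq)$. The crucial estimate is a \emph{uniform} bound $\sup_{\hp,\hq}|L_k(\hp,\hq)|\le B'$ with $B'$ independent of $k$. This rests on the elementary fact that $0\le t_{m,\jk}(h)\le 1$ for $h\in\hist_m$ (the falling-factorial numerator counts ordered selections that embed injectively into those counted by the falling-factorial denominator, using $\|h\|_1=m$), so $|L_k(\hp,\hq)|$ is at most the sum of absolute values of the coefficients of $d_{\X_k}$; the boundedness of $d_{\X_k}$ uniformly in $k,p,q$, together with the fixed degree bounds $r,t$, then controls these coefficient sums. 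In the linearly decomposable case $d_{\X_k}(p,q)=\sum_{x\in\X_k}e(p_x,q_x)$ that motivates the lemma this is transparent: boundedness forces the constant term of $e$ to vanish, so $L_k(\hp,\hq)=\sum_{x\in\X_k}E(\hp_x,\hq_x)$ has at most $r+t$ nonzero terms, each bounded by $\max|E|$, giving $B'=(r+t)\max|E|$. With such a $B'$, $|L(\hp,\hq)|\le B'\sum_k|a_k|<\infty$.

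Finally I would compute $\E_{\Hp\sim p^r,\Hq\sim q^t}L(\Hp,\Hq)=\sum_{k=1}^\infty a_k\,\E_{\Hp\sim p^r,\Hq\sim q^t}L_k(\Hp,\Hq)=\sum_{k=1}^\infty a_k\,d_{\X_k}(p,q)=\ell(p,q)$, justifying the interchange of sum and expectation by dominated convergence with the constant dominating function $B'\sum_k|a_k|$ (integrable against the probability measure $p^r\times q^t$). As $\ell$ is a proper divergence by hypothesis, this exhibits $L$ as a BB proper loss that $(r,t)$-implements $\ell$. The main obstacle is the uniform bound on $|L_k|$ in the previous paragraph: it is exactly what makes both the pointwise convergence of the series defining $L$ and the dominated-convergence step go through, and while it is immediate for linearly decomposable divergences, the general polynomial case requires using the $|t_{m,\jk}|\le 1$ bound and the boundedness hypothesis with some care.
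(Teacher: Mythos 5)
The paper states this lemma without proof, so there is no ``paper proof'' to compare against; your strategy---take canonical implementing losses $L_k$ for each $d_{\X_k}$, form $L=\sum_k a_k L_k$, and interchange sum and expectation---is clearly the intended argument, and your observation that the identity $\E L_k = d_{\X_k}$ survives on the countable domain (collapse $\X\setminus\X_k$ into one multinomial bucket) is correct. Two points need tightening. First, the lemma only hypothesizes that $\sum_k a_k$ converges, but your pointwise convergence of $\sum_k a_k L_k(\hp,\hq)$ and your dominated-convergence step both use $\sum_k|a_k|<\infty$. With only conditional convergence the series defining $L$ need not converge even under a uniform bound $|L_k|\le B'$ (the $L_k(\hp,\hq)$ are not the $d_{\X_k}(p,q)$, so convergence of $\sum a_k d_{\X_k}$ does not transfer). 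You should state explicitly that you are strengthening the hypothesis to absolute convergence (or to $a_k\ge 0$); as written the lemma is too weak to support the argument.

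Second, the uniform bound $\sup_{\hp,\hq}|L_k|\le B'$ is, as you say, the crux, and in the general polynomial case you leave it at ``requires some care.'' It can be completed along the lines you set up. Since $t_{m,\jk}(h)=0$ unless $\mathrm{supp}(\jk)\subseteq\mathrm{supp}(h)$ and $|\mathrm{supp}(\hp)|\le r$, $|\mathrm{supp}(\hq)|\le t$, at most $C(r,t)$ monomials contribute to $L_k(\hp,\hq)$, each with estimator value in $[0,1]$; so it suffices to bound $\max_{k}|a^{(k)}_{\ik,\jk}|$. Boundedness of $d_{\X_k}$ does \emph{not} do this directly (the number of variables grows with $k$, and polynomial representations on the simplex are not unique), but the following does: fix a monomial with supports $S\subseteq\X_k$, $|S|\le r$ in $p$ and $S'$ in $q$, pick $x_0\in\X\setminus\X_k$ (possible since $\X$ is infinite and $\X_k$ finite), and restrict $d_{\X_k}$ to $p$ supported on $S\cup\{x_0\}$ and $q$ supported on $S'\cup\{x_0\}$. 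The restriction kills exactly the monomials not supported in $S\times S'$ and leaves the others with their original coefficients, and it is a polynomial of degree $\le r$ (resp.\ $\le t$) in at most $r$ (resp.\ $t$) free variables ranging over a full-dimensional corner simplex; equivalence of norms on that finite-dimensional polynomial space bounds every coefficient by $C(r,t)\sup|d_{\X_k}|\le C(r,t)B$. With that, your $B'$ exists and the Fubini/dominated-convergence step goes through. (Your shortcut for the linearly decomposable case, including the observation that boundedness forces the constant term of $e$ to vanish, is fine as stated.)
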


For example, $\X$ could be a set of all english word sentences and $\X_k$ could be the set of all length $k$ sentences. 
\section{The Variance Bias Term}
\label{app:jensenGap}
One explanation of why the variance shows up in the above calculations for naive squared error has to do with Bregman Divergences and the Jensen Gap. For this section the empirical distribution is defined as $\phat = \frac{1}{|H|} H$

We recall the definition of a Bregman divergence:
\begin{defn}
Let $G$ be a convex, real valued function. Then the Bregman divergence of $G$ is
\[D_G(q,p) = G(q) - [G(p) + \langle dG(p), q-p\rangle] \]
where $dG(p)$ is a subgradient of $G$ at $p$ \citep{rockafellar1970convex}.
\end{defn}

One reason Bregman divergences are important is because they are known to characterize traditional proper losses:
\begin{theorem}[\cite{mccarthy1956measures,savage1971elicitation,gneiting2007strictly}] \label{thm:propChar}
A loss $r$ is proper if and only if there exists a convex, real valued function $G$ such that 
\[\E_{y \sim q} r(p,y) = \E_{y \sim q}[D_G(\delta_y,p) - G(\delta_y)].\]
\end{theorem}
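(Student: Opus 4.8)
The plan is to recover the classical Savage/McCarthy/Gneiting–Raftery argument, identifying the convex potential $G$ with (minus) the Bayes risk and reading off the required subgradient directly from the loss. Write $\bar{r}(p,q) = \E_{y\sim q} r(p,y)$ for the expected loss and note the single structural fact that drives everything: $\bar{r}(p,\cdot)$ is affine in its second argument, since it is an expectation over $y\sim q$ and $\E_{y\sim q}\delta_y = q$. Define $G(p) := -\bar{r}(p,p)$. Properness says exactly that for each fixed $q$ the map $p\mapsto \bar{r}(p,q)$ is minimized at $p=q$, so $-G(q) = \bar{r}(q,q) = \min_{p}\bar{r}(p,q)$.

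For the ($\Rightarrow$) direction I would argue: $-G$ is a pointwise minimum of affine functions of $q$, hence concave, so $G$ is convex. Fix $p$ and consider the affine function $h_p(q) := -\bar{r}(p,q)$; by the previous display $h_p(q)\le G(q)$ for all $q\in\Delta_{\X}$, with equality at $q=p$. Extending $h_p$ to an affine function on $\R^{\X}$ and letting $dG(p)$ be its gradient, we get $h_p(q) = G(p) + \langle dG(p), q-p\rangle$, and the inequality $h_p\le G$ is precisely the subgradient inequality, so $dG(p)\in\partial G(p)$. Rearranging and using the definition of $D_G$, $\bar{r}(p,q) = -G(p) - \langle dG(p), q-p\rangle = \E_{y\sim q}\!\big[-G(p)-\langle dG(p),\delta_y-p\rangle\big] = \E_{y\sim q}[D_G(\delta_y,p)-G(\delta_y)]$, which is the claimed form.

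For the ($\Leftarrow$) direction, suppose $G$ is convex and the displayed identity holds; expanding $D_G$ gives $\bar{r}(p,q) = -G(p)-\langle dG(p),q-p\rangle$, hence $\bar{r}(q,q) = -G(q)$, and the regret of report $p$ against target $q$ is $\bar{r}(p,q)-\bar{r}(q,q) = G(q)-G(p)-\langle dG(p),q-p\rangle = D_G(q,p)\ge 0$ by convexity of $G$; thus $r$ is proper.

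The only point requiring care is that $\Delta_{\X}$ is not full-dimensional in $\R^{\X}$, so the gradient of the affine function $h_p$ — and therefore $dG(p)$ — is not unique; any representative works, and because $h_p$ is extracted directly from the real-valued loss $r$ we never need a separate existence-of-subgradient theorem at the boundary of the simplex. That bookkeeping, together with keeping the sign conventions in $G$ and $D_G$ straight, is the main obstacle; everything else is routine.
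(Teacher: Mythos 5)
Your proof is correct. Note that the paper does not prove this theorem at all --- it is stated as a cited classical result of McCarthy, Savage, and Gneiting--Raftery --- and your argument is a faithful reconstruction of the standard one: take $G(p)=-\bar r(p,p)$, observe that $-G$ is a pointwise minimum of the affine functions $q\mapsto \bar r(p,q)$ so that $G$ is convex, and read off the subgradient from the supporting affine function $h_p$; the converse is the subgradient inequality giving $D_G(q,p)\ge 0$. Your handling of the two delicate points (non-uniqueness of subgradients on the low-dimensional simplex, and the fact that $\E_{y\sim q}\delta_y=q$ collapses the expectation of $D_G(\delta_y,p)-G(\delta_y)$ to $-G(p)-\langle dG(p),q-p\rangle$) is exactly right, and since the stated identity is required for all $q$, your expectation-level identity is equivalent to the usual pointwise form obtained by taking $q=\delta_y$.
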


\begin{lemma}
If $D_G(\cdot,\cdot)$ is a Bregman divergence then a sample proper loss $L$ can be constructed for it such that $\lbar(p,q) = D_G(p,q)$, provided that an unbiased estimator, $est()$, exists for $G(p)$. 
\begin{align*}
 L(\phat,q) &= D_G(\phat,q) - [G(\phat) - est(G(p), \phat)] \\
 &= G(\phat) - [G(q) + \langle d G(q), \phat-q\rangle] - [G(\phat) - est(G(p),p)] \\
    &= est(G(p),\phat) - [G(q) + \langle d G(q), \phat-q\rangle]
\end{align*}

Note that $\E [G(\phat) - est(G(p),p)] =\E D_G(\phat,p).$ This can be interpreted as the expected additional distance the randomness of $\phat$ adds.
\end{lemma}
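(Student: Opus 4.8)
The plan is to verify the displayed identity $\lbar(p,q) = D_G(p,q)$ by a direct linearity-of-expectation computation, and then to deduce properness from the fact that every Bregman divergence is nonnegative and vanishes on the diagonal. Throughout I read $est(G(p),\cdot)$ as the hypothesized estimator, i.e. a function of the size-$n$ histogram $H$ (equivalently of $\phat=\tfrac1n H$) with $\Elong est(G(p),\phat) = G(p)$ for every $p$, and I take the candidate loss to be $L(\phat,q) = est(G(p),\phat) - \big[G(q) + \langle dG(q),\, \phat - q\rangle\big]$, as in the statement.

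The core step is the expectation computation. Splitting $L$ into its estimator part and its affine-in-$\phat$ part and applying linearity of expectation over $H \sim p^n$: the first term contributes $\Elong est(G(p),\phat) = G(p)$ by unbiasedness; the affine term contributes $G(q) + \langle dG(q),\, \Elong\phat - q\rangle = G(q) + \langle dG(q),\, p-q\rangle$ since $\Elong\phat = p$. Subtracting gives $\lbar(p,q) = G(p) - G(q) - \langle dG(q),\, p-q\rangle = D_G(p,q)$, which is exactly the chain of equalities written in the statement (the intermediate line is just the definition of $D_G(\phat,q)$ with the two $G(\phat)$ terms cancelled).

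Next I would record that $D_G$ is a proper divergence: $D_G(q,q) = 0$, and convexity of $G$ gives $D_G(p,q)\ge 0$ for all $p$, so $D_G(q,q) \le D_G(p,q)$; hence $\lbar$ is a proper divergence and $L$ is RBB proper implementing $D_G$. For the interpretive remark, expanding $D_G(\phat,p) = G(\phat) - G(p) - \langle dG(p),\, \phat - p\rangle$ and taking expectations again kills the linear term via $\Elong\phat = p$, leaving $\Elong D_G(\phat,p) = \Elong G(\phat) - G(p) = \Elong\big[G(\phat) - est(G(p),\phat)\big]$, which matches the claimed equation; the quantity $\Elong G(\phat) - G(p) \ge 0$ is precisely the Jensen gap, explaining the ``extra distance'' added by the randomness of $\phat$.

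The only genuine subtlety — and the step I would be most careful about — is the hypothesis itself: that an unbiased estimator of $G(p)$ from a size-$n$ sample exists \emph{and} is a bona fide function of the histogram $H$. This is exactly where Lemma~\ref{lem:multichar} enters, since under the multinomial $H \sim p^n$ such an estimator exists precisely when $G$ is a polynomial of degree at most $n$, and in that case the explicit estimators $t_{n,\jk}$ supply $est(G(p),\cdot)$ monomial by monomial. Everything else is routine: only linearity of expectation, unbiasedness, and $\Elong\phat = p$ are used, and convexity of $G$ is needed only to conclude $D_G \ge 0$.
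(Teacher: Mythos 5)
Your proposal is correct, and it reaches the identity $\lbar(p,q)=D_G(p,q)$ by a slightly different route than the paper. You take the final, already-simplified form $L(\phat,q)=est(G(p),\phat)-\bigl[G(q)+\langle dG(q),\phat-q\rangle\bigr]$ and verify its expectation directly: unbiasedness handles the first term, and $\Elong \phat = p$ handles the affine term. The paper instead starts from the naive loss $D_G(\phat,q)$ and applies the law of cosines for Bregman divergences, $D_G(\phat,q)=D_G(\phat,p)+D_G(p,q)-\langle \phat-p,\nabla G(q)-\nabla G(p)\rangle$; taking expectations kills the cross term and exposes the bias as $\Elong D_G(\phat,p)$, which is then identified with the Jensen gap $\Elong G(\phat)-G(\Elong\phat)$ and cancelled by the correction $G(\phat)-est(G(p),\phat)$. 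The two arguments use exactly the same ingredients (linearity of expectation and $\Elong\phat=p$), so neither is more general; your version is shorter and more self-contained, while the paper's decomposition makes the structural point of the appendix explicit — namely that the bias of the plug-in loss is precisely the expected Bregman divergence from $\phat$ to $p$, i.e.\ the Jensen gap, which you also recover at the end when verifying the interpretive remark. Your additional observations — that properness of $\lbar$ follows from $D_G\ge 0$ and $D_G(q,q)=0$, and that existence of a histogram-based unbiased estimator of $G(p)$ is the real hypothesis and is governed by Lemma~\ref{lem:multichar} (polynomial $G$ of degree at most $n$) — are correct and go slightly beyond what the paper writes in this proof, which leaves properness implicit.
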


\begin{proof}
Begin with the law of cosines for Bregman divergences and take the expectation of both sides.
\begin{align}
    \Elong D_G(\phat, q) &= \E D_G(\phat, p) + D_G(p,q) - \E \langle \phat - p, \nabla G(q) - \nabla G(p) \rangle \notag \\
    &= \E D_G(\phat, p) + D_G(p,q) - 0 \label{intuition}\\
    &= \E[G(\phat) - [G(p) + \langle\nabla G(p), \phat-p)]] + D_G(p,q)  \notag\\
    &= \E[G(\phat)] - [G(p) + \langle\nabla G(p), \E\phat-p)]] + D_G(p,q)  \notag\\ 
    &= \E[G(\phat)] - G(\E \phat) + D_G(p,q).  \label{2}
\end{align}

Let us note several things here. First, line (\ref{intuition}) formalizes the intuition we have outlined above. Second we also clearly see that the expected Bregman divergence between $\phat$ and $p$, $\E_{\phat \sim p^n} D_G(\phat,p)$, is exactly the Jensen gap, $\E[G(\phat)] - G(\E \phat)$, as exhibited by the resulting expression in (\ref{2}). Hence, rearranging for clarity we see that

\begin{align*}
    D_G(p,q) &= \Elong \big[ D_G(\phat, q)- D_G(\phat,p)\big]\\
    &= \E D_G(\phat,q) - \big[\E G(\phat) - G(\E \phat)\big].
\end{align*}
\end{proof}

\begin{example}
As an example, let's look at the Jensen Gap for $G(x) = \|x\|^2$. 

\begin{align*}
    \Elong G(\phat) - G(\Elong \phat) &= \E [\|\phat\|^2] - \|\E\phat\|^2 \\
    &= \E[\sum_\X \phat_x^2] - \sum_\X \E[\phat_x]^2 \\
    &= \sum_\X \E[\phat_x^2] - \E[\phat_x]^2 \\
    &= \sum_\X Var(\phat_x).
\end{align*}
\end{example}
 
\section{General Sampling Schemes} \label{app:GBB}
\begin{defn} \label{def:gbbloss}
A \emph{generic-black-box (GBB)} loss is a function $L: \N^{\X} \times \N^{\X} \rightarrow \R$ where $L(\hp,\hq)$ is the loss assigned to histogram $\hp$ of samples drawn from the model on histogram $\hq$ of samples drawn from the target distribution.
\end{defn}

The difference between a GBB loss and a BB loss (Definition \ref{def:BBloss}) is that we allowed BB losses to be a function of histograms of a specific, predetermined size ($n$ and $m$).
In contrast, a GBB loss must be defined for histograms of any size.
These functions can also compute $N$, the sample size.

\begin{defn}\label{def:samplingscheme}
A \emph{sampling scheme} $r$ is a stopping rule for the process of drawing observations from a black-box generative model.
The stopping rule may depend on the history of the seen observations and may also use randomness.
\end{defn}
\begin{defn} \label{def:gbbproper}
Let $r,t$ be sampling schemes for the report and the target distribution, respectively.
A generic-black-box loss $L$ is \emph{$(r,t)$-black-box proper} if $\lbar(p,q):=\E_{r,t} L(\Hp, \Hq)$ is a proper divergence.
If $\ell$ is some proper divergence and there exists $L$ such that $\lbar = \ell$, we will say that $L$ \emph{$(r,t)$-implements} $\ell$ and that $\ell$ is \emph{$(r,t)$-implementable}.
\end{defn}

Given a characterization of the U-estimable functions under certain sampling schemes, we can construct the set of implementable proper divergences.
We can also construct the respective implementing losses from these characterizations.
We do not investigate the sample complexity of the schemes or define minimally-implementable in the generic setting.
While one could consider ordering generic sampling schemes by e.g. expected number of ramples drawn, the most reasonable ordering of sampling schemes is not always clear, and we leave such investigations to future work.

\begin{theorem} \label{thm:genconstr}
Let $r,t$ be sampling schemes.
Let $\T$ be the set of all proper distances and let
\[\F_r = \{\ell(\cdot, \cdot) : \ell \text{ is unbiasedly estimable in the first argument under sampling scheme }r\}\]
\[\F_t = \{\ell(\cdot, \cdot) : \ell \text{ is unbiasedly estimable in the second argument under sampling scheme }t\}.\]
Then the set of all $(r,t)$-implementable proper divergences is 
\[BB_{r,t} = \T \cap \F_r \cap \F_t\]

\end{theorem}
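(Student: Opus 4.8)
The plan is to mirror the two-direction argument of Theorems~\ref{thm:rbb} and~\ref{thm:bb}, with the explicit multinomial/Poisson estimators replaced by the abstract estimability classes $\F_r,\F_t$. Two structural facts will be used throughout: the observations drawn under $r$ from $p$ are independent of those drawn under $t$ from $q$, so $\Hp$ and $\Hq$ are independent random histograms; and, for a fixed sampling scheme, the set of unbiasedly estimable functions is a linear space.

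For the inclusion $BB_{r,t}\subseteq \T\cap\F_r\cap\F_t$, I would start from an implementing GBB loss $L$ with $\lbar=\ell$. Properness, hence $\ell\in\T$, is immediate from Definition~\ref{def:gbbproper}. To get $\ell\in\F_r$, fix $q$ and condition on $\Hq$: by independence, $g_q(h):=\E_t[L(h,\Hq)]$ is a function of a single histogram, and the tower rule gives $\E_r[g_q(\Hp)]=\ell(p,q)$ for all $p$, so $g_q$ unbiasedly estimates $\ell(\cdot,q)$ in the first argument under $r$. Conditioning instead on $\Hp$ gives $\ell\in\F_t$ symmetrically.

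For the reverse inclusion $\T\cap\F_r\cap\F_t\subseteq BB_{r,t}$, given $\ell$ in this set I would use $\ell\in\F_t$ to obtain an estimator $\psi(p,\hq)$ with $\E_t[\psi(p,\Hq)]=\ell(p,q)$, and then invoke the characterization of U-estimable functions under $t$ (Lemma~\ref{lem:multichar} for a deterministic budget, Lemma~\ref{lem:poiest} for Poisson, and the analogous statement for a general $t$) to put $\psi$ in ``separated'' form $\psi(p,\hq)=\sum_k c_k(p)\,u_k(\hq)$, where each coefficient $c_k(p)$ is a finite linear combination of evaluations $\ell(\cdot,q')$. Since $\ell(\cdot,q')\in\F_r$ and the $r$-estimable functions form a linear space, each $c_k$ is $r$-estimable; replacing it by an unbiased estimator $\hat c_k(\hp)$ gives the candidate GBB loss $L(\hp,\hq)=\sum_k\hat c_k(\hp)\,u_k(\hq)$, which is defined for histograms of every size. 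Independence of $\Hp,\Hq$ with linearity and the tower rule should then yield $\E_{r,t}[L(\Hp,\Hq)]=\E_t[\psi(p,\Hq)]=\ell(p,q)$, so $L$ implements $\ell$.

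The step I expect to be the main obstacle is the ``separated form'' in the reverse direction: one needs the estimability of $\ell$ in the second argument to be witnessed by an estimator whose coefficients, as functions of the first argument, stay inside the $r$-estimable class, and --- when $r$ or $t$ does not draw a bounded number of samples --- one must check that the resulting series $\sum_k\hat c_k(\hp)\,u_k(\hq)$ converges with the claimed expectation. For deterministic budgets this is free since the estimable classes are finite-dimensional (bounded-degree polynomials); for Poisson sampling it is precisely what the convergence hypotheses of Corollary~\ref{cor:poiConstruction} supply; and for an arbitrary pair of schemes it is what one must verify before applying this theorem to a concrete case.
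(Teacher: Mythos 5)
Your proposal is correct and is in fact considerably more careful than the paper's own proof, which is a two-sentence assertion: the paper says only that the characterizations of $\F_r$ and $\F_t$ ``must provide constructions of the unbiased estimators,'' concludes that one can therefore construct an estimator for each $\ell \in \F_r \cap \F_t$, and does not address the inclusion $BB_{r,t} \subseteq \T \cap \F_r \cap \F_t$ at all. Your conditioning/tower-rule argument for that inclusion is the natural generalization of what the paper does concretely in Theorem~\ref{thm:bb} via Lemma~\ref{lem:multichar}, and it is sound given the independence of $\Hp$ and $\Hq$. For the forward inclusion you and the paper take the same basic route --- build an estimator in each argument separately and combine them using independence --- but you are right that the ``separated form'' step is the actual mathematical content: marginal estimability in each argument does not by itself produce a single jointly unbiased $L(\hp,\hq)$ without additional structure on the estimable classes (finite-dimensionality of the bounded-degree polynomials for fixed-size multinomial sampling; the convergence hypotheses of Corollary~\ref{cor:poiConstruction} for Poisson sampling). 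The paper's proof silently folds this into the phrase ``provide constructions,'' i.e., it implicitly assumes the characterization of each class comes equipped with estimators that can be tensored across the two independent samples. Your version makes that assumption explicit and identifies it as the thing one must verify before applying the theorem to any concrete pair of schemes, which is exactly where the theorem's generality has to be cashed out; this is a strength of your write-up rather than a defect.
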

\proofConditional{
\begin{proof}
If we can characterize $\F_r$ and $\F_t$ then we have a characterization of the unbiasedly estimable functions under sampling schemes $r$ and $t$, respectively. These characterizations must provide constructions of the unbiased estimators. Thus we can construct an unbiased estimator for each $\ell \in F_r \cap F_t$. Hence $\ell \in BB_{r,t}$ is implementable and a proper divergence, by definition.
\end{proof}
} 
\section{Omitted results from section \ref{sec:discussion}}
\label{app:highdim}
\subsection{Implementation of the Cram{\'e}r Distance}
\begin{corollary} \label{crps}
For densities $p,q$ over $[0,1]$, let $s$ and $u$ be a samples drawn from $p$ and $q$, respectively.
Then the loss
\[L(s, u) = \intR (F_{s}(x) - F_{u}(x))^2 - s^2_{|s|}(F_{s}(x)) - s^2_{|u|}(F_{u}(x)) ~dx\]
$(2,2)$-minimally-implements the Cram{\'e}r distance, $\ell(p,q) = \intR (F_{p}(x) - F_q(x))^2 dx$ \citep{cramer1928composition}.
\end{corollary}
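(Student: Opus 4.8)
The plan is to prove Corollary \ref{crps} by combining the continuous construction of Theorem \ref{thm:cont} with the variance-correction trick already used in Claim \ref{ex:RBBsqLoss}, applied ``pointwise in $x$'' inside the integral. First I would recall that for each fixed $x$, the empirical CDF value $F_{s}(x) = \frac{1}{|s|}\sum_i \ind_{s_i \le x}$ is exactly an empirical mean of i.i.d.\ Bernoulli random variables with parameter $F_p(x)$; likewise $F_{u}(x)$ is an empirical mean of Bernoullis with parameter $F_q(x)$, and the two samples are independent. So for each $x$ the integrand is precisely the squared-loss situation of the black-box setting, and the Bernoulli sample sizes are $|s| \ge 2$ and $|u| \ge 2$.

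Next I would compute the expectation of the integrand pointwise. By the same calculation as in the sketch of the first claim, $\E (F_s(x) - F_u(x))^2 = (F_p(x) - F_q(x))^2 + \var(F_s(x)) + \var(F_u(x))$, using independence of $s$ and $u$ and $\E F_s(x) = F_p(x)$, $\E F_u(x) = F_q(x)$. Then by Claim \ref{varest} (the MVUE for the variance of a Bernoulli mean), $\E s^2_{|s|}(F_s(x)) = \var(F_s(x))$ and $\E s^2_{|u|}(F_u(x)) = \var(F_u(x))$. Subtracting these correction terms cancels the two variance contributions, so the integrand has expectation $(F_p(x) - F_q(x))^2$ for every $x$. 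Applying Fubini/Tonelli to exchange $\E$ and $\intR$ (the integrand is bounded by $1$ and supported on $[0,1]$, so this is harmless) gives $\lbar(p,q) = \intR (F_p(x) - F_q(x))^2\,dx$, which is the Cram\'er distance and is a (strictly) proper divergence; hence $L$ is BB proper and implements it.

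Finally I would argue minimal-implementability, i.e.\ that $(2,2)$ is tight. This should follow from Theorem \ref{thm:cont} (or directly from Theorem \ref{thm:bb} applied pointwise): the integrand $g(F_p(x), F_q(x)) = (F_p(x) - F_q(x))^2 = F_p(x)^2 - 2 F_p(x) F_q(x) + F_q(x)^2$ is a polynomial of degree exactly $2$ in $F_p(x)$ and degree exactly $2$ in $F_q(x)$. By the characterization of U-estimable functions under the binomial/multinomial (Lemma \ref{lem:binest}, Lemma \ref{lem:multichar}), estimating $F_p(x)^2$ unbiasedly requires at least $2$ observations from $p$, and symmetrically for $q$; any loss on fewer samples fails to be unbiased for the $F_p(x)^2$ (resp.\ $F_q(x)^2$) term for some $x$, hence cannot have expectation equal to the Cram\'er distance. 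The main obstacle I anticipate is being careful about the interchange of expectation and integration and about the fact that ``degree $2$ in $F_p(x)$'' must be made uniform over $x$ to invoke the lower-bound half of Theorem \ref{thm:cont}; both are minor, since the integrand is a fixed degree-$2$ polynomial in the CDF values independent of $x$, and boundedness makes Fubini immediate.
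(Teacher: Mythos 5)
Your proposal is correct and follows essentially the same route as the paper's proof: expand the square pointwise in $x$, use independence of the two samples and the fact that $|s|\,F_s(x)\sim \mathrm{Bin}(|s|,F_p(x))$, cancel the two variance terms via the unbiased estimator of Claim \ref{varest}, and integrate. You are somewhat more explicit than the paper about the Fubini interchange and about why $(2,2)$ is tight (the degree-$2$ argument via Lemma \ref{lem:binest}), both of which are correct and welcome additions since the paper's proof only verifies the expectation identity.
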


\begin{proof}[Proof of Corollary \ref{crps}]
Notice that $F_{S}(x) = \frac{|i: X_i \leq x|}{n}$ is distributed according to $Bin(F_p(x))$.
\begin{align*}
\E_{S,U} L(S, T) &= \E_{S,U} \intR (F_{S}(x) - F_{U}(x))^2 - s^2_{|S|}(F_{S}(x)) - s^2_{|T|}(F_{U}(x)) ~dx \\
&= \intR  \E[F_{S}(x)^2] -2\E F_{U}(x)\E F_{S}(x) + \E[F_{U}(x)]^2 - Var(F_{S}(x)) - Var(F_{U}(x))dx \\
&= \intR  \E[F_{S}(x)]^2 -2 F_q(x) F_{p}(x) + \E[F_{U}(x)]^2dx \\
&= \intR (F_{p}(x) - F_q(x))^2 dx.
\end{align*}
Where the second equality is by the independence of $S$ and $T$ and the previously defined variance estimator for the binomial distribution (claim \ref{varest}).
One could also prove this using the technique from the proof of claim \ref{twobbsqex}.
\end{proof}

The energy distance in one dimension is equivalent to twice the Cram{\'e}r distance.
Thus the energy distance also gives a loss that implements the Cram{\'e}r distance.
See appendix \ref{app:contLoss} for a discussion of the relationships between different types of losses in the continuous setting

\subsection{High Dimensional Extension of the Cram{\'e}r Distance}

Let us now work in a continuous domain where the samples are from $\R^j$. Now instead of distributions $p,q \in \Delta_{\X}$ we will have densities $p,q$ on $\R^j$. The desired score will again be the Harald Cram{\'e}r distance. However now we will define a CDF with respect to a direction and then integrate over all directions. 

\begin{defn}
(Generalized CDF). Let $Y$ be a random variable taking values in $\R^j$, $p$ be the associated density, and $v\in \R^j$ such that $\|v\| = 1$. Then the direction $v$ CDF of $Y$ is

$$F_p^v(x) = \Pr[\langle v, Y \rangle + x \leq 0].$$

Where $x \in \R$. 

\end{defn}

The distance analogous to the Harald Cram{\'e}r distance is then

$$\intvec \intR (F_p^v(x) - F_q^v(x))^2 dx ~ dv.$$

Now to create a sample proper loss we may again introduce a variance correction term as before. However, we also note that if we pick a random direction $v$, then we would not have to integrate over all $v$ since the expectation of the distance under a random $v$ is the same as the deterministic distance.

\begin{claim}
Let $p,q$ be densities over $\R^j$ and $s$ be the sample drawn from $p$. Then the following loss is RBB proper. First pick a random unit vector $v \in \R^j$ then

$$L(s, q) = \int_{\R} (F_p^v(x) - F_q^v(x))^2 - s^2_n(F_p^v(x)) ~dx.$$

Where $\lbar(p,q) = \intvec \intR (F_p^v(x) - F_q^v(x))^2 dx ~ dv$.
\end{claim}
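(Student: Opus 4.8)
The plan is to mimic the proof of Corollary \ref{crps}, lifting the one-dimensional Cram\'er argument to the directional CDFs. First I would fix a unit vector $v \in \R^j$ and observe that, exactly as in the one-dimensional case, for each $x \in \R$ the empirical quantity $F^v_s(x) = \frac{1}{n}|\{i : \langle v, X_i\rangle + x \leq 0\}|$ is a sum of $n$ i.i.d. Bernoulli indicators (the event $\langle v, X_i\rangle + x \leq 0$ has probability $F^v_p(x)$), hence $n F^v_s(x) \sim Bin(n, F^v_p(x))$. This is the only structural fact needed: once $F^v_s(x)$ is a scaled binomial with mean $F^v_p(x)$, Claim \ref{varest} tells us $s^2_n(F^v_s(x))$ is an unbiased estimator for $\var(F^v_s(x))$.

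Next I would compute the conditional expectation given the random direction $v$. Writing out the square and using linearity of expectation together with the fact that $s$ and $q$ are independent (here $q$ is available in closed form, so $F^v_q(x)$ is a constant),
\begin{align*}
\E_{s}\left[(F^v_s(x) - F^v_q(x))^2 - s^2_n(F^v_s(x))\right]
 &= \E[F^v_s(x)^2] - 2 F^v_q(x)\,\E[F^v_s(x)] + F^v_q(x)^2 - \var(F^v_s(x))\\
 &= \E[F^v_s(x)]^2 - 2 F^v_q(x) F^v_p(x) + F^v_q(x)^2\\
 &= (F^v_p(x) - F^v_q(x))^2 .
\end{align*}
Integrating over $x \in \R$ (invoking Tonelli/Fubini to exchange the expectation and the $dx$ integral, which is justified since the integrand is nonnegative after the variance correction cancels the bias, or under suitable integrability of the densities) gives $\E_s L(s,q) = \intR (F^v_p(x) - F^v_q(x))^2\,dx$ for every fixed $v$. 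Finally, taking the expectation over the uniformly random unit vector $v$ and exchanging that expectation with the $x$-integral yields $\lbar(p,q) = \intvec \intR (F^v_p(x) - F^v_q(x))^2\,dx\,dv$, which is the claimed expression; since this is a proper divergence in $(p,q)$ (it is nonnegative and vanishes iff $F^v_p = F^v_q$ for a.e.\ $v$, i.e.\ iff $p = q$), $L$ is RBB proper.

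The main obstacle is not the algebra but the measure-theoretic bookkeeping in the continuous setting: justifying the interchange of $\E_s$, $\E_v$, and $\intR \cdots dx$, and making sure the integrals converge. I would handle this by noting that the per-coordinate computation already matches Corollary \ref{crps}, so the $x$-integral interchange is identical to that already-accepted proof, and the outer $v$-average is over a compact probability space so Fubini applies once the inner integral is known to be integrable in $v$ (which it is, being bounded by $\intR (F^v_p(x)-F^v_q(x))^2 dx$, a quantity finite under the same conditions that make the one-dimensional Cram\'er distance finite). A secondary point worth a remark is that the expectation over $v$ is what lets us replace the full directional integral by a single random draw, exactly the variance-free-in-expectation trick mentioned in the discussion; this is immediate from the tower property once the fixed-$v$ identity is established.
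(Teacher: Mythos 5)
Your proof is correct and follows essentially the same route as the paper's: identify $nF^v_s(x)$ as $Bin(n,F^v_p(x))$, expand the square, cancel the variance term against the unbiased variance estimator (Claim \ref{varest}), and exchange the expectation with the $x$- and $v$-integrals to obtain the generalized Cram\'er distance. Your reading of the variance term as $s^2_n(F^v_s(x))$ (applied to the \emph{empirical} CDF, despite the statement's notation) and your added remarks on Fubini and on why the resulting divergence is proper are consistent with, and slightly more careful than, the paper's argument.
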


\begin{proof}
Let $S=(X_1, X_2, \dots, X_n)$.
\begin{align*}
\E_{v \in \mathcal{S}^{j-1}} \E_S L(\phat, q) &= \E_{v \in \mathcal{S}^{j-1}} \E_S \intR (F_S^v(x) - F_q^v(x))^2 - s^2_n(F_p^v(x)) ~dx\\
&= \intvec \E \intR (F_S^v(x) - F_q^v(x))^2 - s^2_n(F_p^v(x)) ~dx ~ dv \\
&= \intvec \intR \E[(F_S^v(x) - F_q^v(x))^2 - s^2_n(F_p^v(x))] ~dx ~ dv \\
&= \intvec \intR  F_{p}(x)^2 + \frac{F_p^v(x)(1-F_p^v(x))}{n} -2F_q^v(x)F_{p}^v(x) + F_q^v(x)^2\\ &~~~~~~~~~~~~~~~-\frac{F_p^v(x)(1-F_p^v(x))}{n} ~dx ~ dv\\
&= \intvec \intR  F_{p}(x)^2  -2F_q^v(x)F_{p}^v(x) + F_q^v(x)^2  ~dx ~ dv\\
&=  \intvec \intR (F_p^v(x) - F_q^v(x))^2 dx ~ dv.
\end{align*}

Let  $C \sim Ber(n,F_p^v(x))$. Once again note that $F_S^v(x)=\frac{1}{n} |\{X_i \in S:\langle v, X_i \rangle + x \leq 0 \}| = \frac{1}{n}C$. Hence we expand the expectation with the first and second moment as in Claim \ref{crps}.
\end{proof}
 
\section{Discussion of other continuous losses}
\label{app:contLoss}
We discuss our results in relation to two other methods of generative model evaluation in the continuous setting. Our results rely on computing losses based on the empirical CDF whether in one or many dimensions.

First, unless estimation/smoothing is done on the empirical density, it is not possible to work with losses that integrate over a function of the two densities at every point in the outcome space. There is a large body of work on density estimation for evaluating generative models. However, losses based on kernel density estimation are beyond the scope of this work. 

Second, one can trivially construct proper losses based on functions of the random variables associated with densities $p$ and $q$. For example the energy distance is
$$D^2(F,G) = 2 \E \|X-Y\| - \E\|X-X'\|- \E\|Y-Y'\|.$$
Where $X,X'$ and $Y,Y'$ are independent copies of the random variable associated with density $p$ and $q$, respectively.

The number of independent copies of a random variable in the expression is exactly the number of independent samples from that random variable required to unbiasedly estimate the loss. For the energy distance, we need 2 independent samples from $p$ and $q$ each. In one dimension these can also be written as functions of the empirical CDF.

\subsection{Connection to energy distance in one dimension}
In the one dimensional continuous setting, we have densities $p,q$ on $\R$. We repeat the proof that the energy distance is equal to twice the Cram{\'e}r distance. We can see from our approach or the form of the energy distance that this loss is $(2,2)$-minimally implementable.

\begin{lemma} \citep{szekely2005new}
Let $X,X'$ be i.i.d. with CDF $F(x)$ and $Y, Y'$ be i.i.d. with CDF $G(y)$. Then the energy distance in one dimension is equal to twice the Cram{\'e}r distance.
$$2\E|X-Y| - \E|X-X'| - \E|Y-Y'| = 2\intR (F(x) - G(x))^2 dx.$$
\end{lemma}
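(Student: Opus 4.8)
The plan is to reduce this classical identity to a pointwise representation of the absolute value as an integral of a squared indicator, then to swap expectation and integration via Tonelli's theorem (which is painless here because the relevant integrand is nonnegative, so the identity is valid even when both sides are infinite), and finally to expand the square and collect terms.

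First I would establish the elementary fact that for any reals $a,b$,
\[ |a-b| = \intR \bigl(\ind[a \le t] - \ind[b \le t]\bigr)^2 \, dt . \]
Assuming without loss of generality $a \le b$, the integrand $\ind[a \le t] - \ind[b \le t]$ equals $1$ exactly when $a \le t < b$ and $0$ otherwise; being $\{0,1\}$-valued it equals its own square, and its integral over $\R$ is $b-a = |a-b|$.

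Next I would apply this with $(a,b) = (X,Y)$, take expectations, and use Tonelli to move $\E$ inside the $t$-integral (legitimate since $(\ind[X \le t] - \ind[Y \le t])^2 \ge 0$; if either side is $+\infty$ the claimed identity is trivially true, so I may assume finiteness hereafter). Expanding the square, using that an indicator equals its square so that $\E\, \ind[X \le t]^2 = F(t)$, and using independence of $X$ and $Y$ so that $\E\, \ind[X \le t]\ind[Y \le t] = F(t)G(t)$, gives
\[ \E|X-Y| = \intR \bigl(F(t) - 2F(t)G(t) + G(t)\bigr)\, dt . \]
The same computation applied to $(X,X')$ and to $(Y,Y')$ yields $\E|X-X'| = \intR \bigl(2F(t) - 2F(t)^2\bigr)\, dt$ and $\E|Y-Y'| = \intR \bigl(2G(t) - 2G(t)^2\bigr)\, dt$.

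Finally I would substitute these three expressions into $2\E|X-Y| - \E|X-X'| - \E|Y-Y'|$; the terms linear in $F$ and linear in $G$ cancel, leaving $\intR \bigl(2F(t)^2 - 4F(t)G(t) + 2G(t)^2\bigr)\, dt = 2\intR (F(t)-G(t))^2\, dt$, which is the claim. The only point requiring care is the interchange of expectation and integral; I expect this to be the main (though minor) technical obstacle, and it is dispatched cleanly by nonnegativity of the integrand rather than by any a priori integrability assumption on $|X-Y|$.
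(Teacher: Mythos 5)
Your proof is correct and takes essentially the same route as the paper: both rest on the integral representation of $|a-b|$ via indicators (your squared difference $\bigl(\ind[a\le t]-\ind[b\le t]\bigr)^2$ is identically the paper's sum $\ind(a\le t<b)+\ind(b\le t<a)$), followed by a Fubini/Tonelli swap, the same three integrands $F-2FG+G$, $2F-2F^2$, $2G-2G^2$, and the same cancellation. Your explicit appeal to Tonelli and handling of the infinite case is a minor tidying of the paper's bare invocation of Fubini, not a different argument.
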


\begin{proof}
We will convert the energy distance into the Cramer distance. First we use the identity

$$|X-Y| = \intR \ind(X \leq u < Y) + \ind(Y \leq u < X) du$$

Now let $A=\E|X-Y|, B= \E|X-X'|, C=|Y-Y'|.$ We then use Fubini's theorem.

\begin{align*}
    A &= \E|X-Y| \\
    &= \intR\intR\intR \ind(X \leq u < Y) + \ind(Y \leq u < X) du dx dy \\
    &= \intR\intR\intR \ind(X \leq u < Y) + \ind(Y \leq u < X) dx dy du \\
    &= \intR \Pr[X \leq u]\Pr[Y>u] + \Pr[X>u]\Pr[Y\leq u] du \\
    &= \intR F(u)(1-G(u)) + (1-F(u))G(u) du\\
    &= \intR F(u)-2F(u)G(u) +G(u) du
\end{align*}

Hence by similar derivation, $B= \intR 2F(u) - 2F(u)^2 du $ and $C=\intR 2G(u) - 2G(u)^2 du$. The lemma follows by simple algebra.

\end{proof}

\subsection{Connection to the CRPS}
We derived the Cram{\'e}r distance via extending the Continuously Ranked Probability Score from the proper scoring rules literature. Intuitively, one can think CRPS as evaluating a distribution against an empirical distribution consisting of a single sample \citep{gneiting2007strictly}. Let $F_r$ be the CDF of a density $r$ and again $p$ be the reported distribution and $q$ the true distribution. Then the CRPS (in terms of a loss to minimized) for a outcome particular $y$ drawn from the density $q$ is

\begin{equation} \label{eqn:crps}
   \int_{-\infty}^{\infty} (F_p(x) - \ind\{x \geq y\})^2 dx = \int_{-\infty}^{\infty} (F_p(x) - F_{\qhat}(x))^2 dx. 
\end{equation}

Where $\qhat(x) = \Hq$ is the empirical distribution of the data consisting of the single sample $y$. Note that $F_{\qhat}(x)$ is $0$ below $y$ and $1$ when greater than or equal to $y$. Hence $F_{\qhat}(x) = \ind\{x \geq y\}$. It is easy to see from the form of the CRPS that CRPS is also $(2,1)$-minimally-implementable since the LHS of (\ref{eqn:crps}) contains a polynomial of degree $2$ in $F_p$ and it requires only $1$ sample from $q$. There also exists a form of the CRPS derived from the form of the equivalent energy distance that also shows $(2,1)$-minimal-implementability \citep{gneiting2007strictly}.

To extend CRPS to our setting, in which we have an empirical densities for $\phat$ and $\qhat$, we derived

$$\ell(p, q) = \int_0^1 (F_{p}(x) - F_q(x))^2 dx.$$

Which is the Harald Cram{\'e}r distance \citep{cramer1928composition}. The CRPS is a special case when we draw only $1$ sample from $q$. We give a BB loss that implements this distance in claim \ref{crps}.   
\end{document}